\documentclass[accepted]{uai2026} 
                        
\usepackage[american]{babel}

\usepackage{natbib} 
\usepackage{mathtools} 
\usepackage{booktabs} 
\usepackage{tikz} 
\usepackage{multirow}
\usepackage{amsmath}
\usepackage{amssymb}
\usepackage{algorithm}
\usepackage{algorithmic}
\usepackage{amsthm}
\usepackage{subcaption}
\usepackage{xcolor}
\newcommand{\Myfrac}[2]{\ensuremath{#1\mathord{\left/\right.\kern-\nulldelimiterspace}#2}}
\newcommand{\rev}[1]{{\color{black}#1}}

\title{Efficient Federated Conformal Prediction with Group-Conditional Guarantee}

\author[1]{Haifeng~Wen}
\author[2]{Osvaldo~Simeone}
\author[1,3]{Hong~Xing}
\affil[1]{%
    IoT Thrust\\
    The Hong Kong University of Science and Technology (Guangzhou)\\
    Guangzhou, China
}
\affil[2]{%
    Institute for Intelligent Networked Systems (INSI)\\
    Northeastern University London\\
    London, UK
}
\affil[3]{%
    Department of ECE\\
    The Hong Kong University of Science and Technology\\
    HK SAR
}

\theoremstyle{plain}
\newtheorem{theorem}{Theorem}[section]

\newtheorem{corollary}{Corollary}[section]
\newtheorem{lemma}{Lemma}[section]

\theoremstyle{definition}

\newtheorem{claim}{Claim}
\theoremstyle{remark}

\begin{document}
\maketitle

\begin{abstract}
   
Deploying trustworthy AI systems requires principled uncertainty quantification. Conformal prediction (CP) is a widely used framework for constructing prediction sets with distribution-free coverage guarantees. In many practical settings, including healthcare, finance, and mobile sensing, the calibration data required for CP are distributed across multiple clients, each with its own local data distribution. In this federated setting, data can often be partitioned into, potentially overlapping, groups, which may reflect client-specific strata or cross-cutting attributes such as demographic or semantic categories. \rev{We propose \emph{group-conditional} federated conformal prediction (GC-FCP), a federated extension of conditional conformal calibration for a target mixture over prespecified groups.} \rev{GC-FCP constructs mergeable, atom-stratified coresets from local calibration scores, enabling compact aggregation at the server when the number of active atoms is moderate.} Experiments on synthetic and real-world datasets validate the performance of GC-FCP compared to centralized calibration baselines.
\rev{The code of our work can be found at \url{https://github.com/HaifengWen/GC-FCP}.}
\end{abstract}

\section{Introduction} \label{sec:introduction}

\begin{figure*}[t] 
\centering
    \includegraphics[width=1\linewidth]{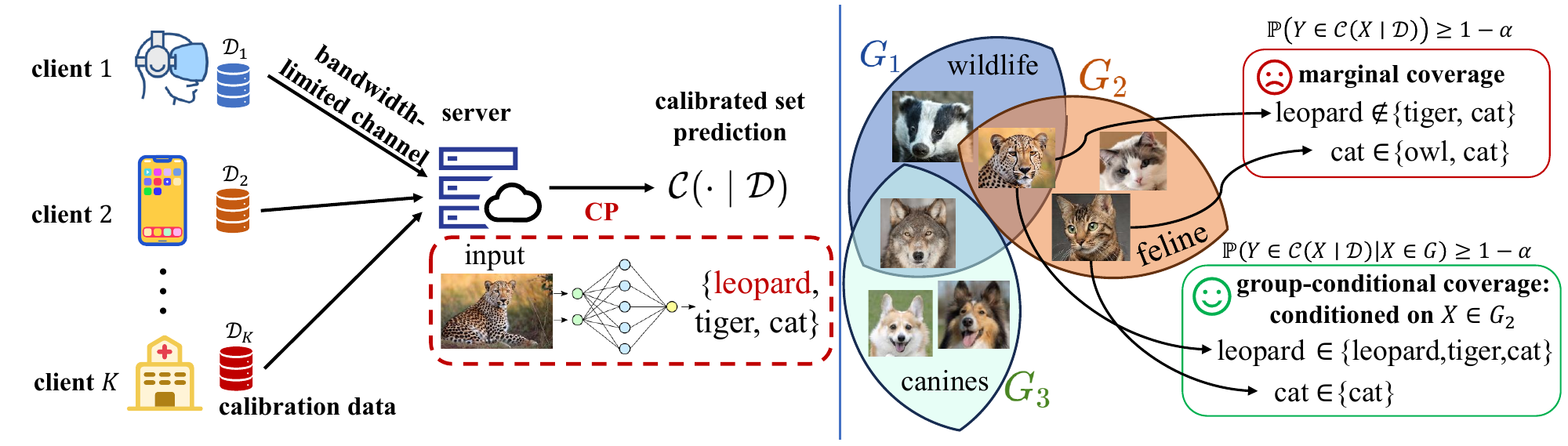}
    \caption{\emph{Left}: In a federated system with heterogeneous clients, each client $k$ holds local calibration data $\mathcal D_k$ and communicates over a bandwidth-limited channel to a central server. The server aggregates these summaries to perform conformal calibration (CP) and outputs a set-valued predictor $\mathcal{C}(\cdot\mid\mathcal D)$. 
    \emph{Right}: GC-FCP targets group-conditional coverage for potentially overlapping groups $\mathcal{G}=\{G_1,G_2,G_3\}$, ensuring the inequality $\mathbb P(Y\in \mathcal{C}(X\mid\mathcal D)\mid X\in G)\ge 1-\alpha$ for all groups $G \in \mathcal{G}$, whereas methods that only guarantee marginal coverage $\mathbb P(Y\in \mathcal{C}(X\mid\mathcal D))\ge 1-\alpha$, such as FCP~\citep{lu2023federated}, may still under-cover within specific groups.}
\label{fig:system_model}
\end{figure*}

Deploying trustworthy AI systems critically depends on uncertainty quantification to enable reliability control at deployment time.
Given a pretrained model, \emph{conformal prediction (CP)} post-processes the model's outputs to construct prediction sets with finite-sample, distribution-free coverage guarantees.
The \emph{calibration} of the prediction set is carried out offline by leveraging held-out calibration data \citep{vovk2005algorithmic,angelopoulos2021gentle,barber2021jackknifeplus,simeone2025uncertainty}.
In many practical deployments, however, calibration data are inherently distributed and subject to privacy constraints, so that each client must retain its data on-site, e.g., hospitals, banks, or Internet-of-Things devices \citep{mcmahan2017communication}.

\emph{Federated conformal prediction} (FCP)~\citep{lu2023federated} addresses the outlined distributed setting with the goal of preserving marginal coverage with respect to a distribution obtained by mixing local data distributions. 
Pursuing a similar marginal coverage guarantee, \citet{humbert2023one} and \citet{humbert2024marginal} proposed one-shot FCP schemes based on a quantile-of-quantiles estimator.
Beyond calibration targeting the mixture distribution, FCP-Pro~\citep{li2025fcp} and personalized FCP~\citep{min2025personalized} address the heterogeneity of local data distributions by training additional models during the calibration process, providing marginal coverage on local distributions or on the distribution of a new client.
To address robustness, \citet{kang2024certifiably} studied Byzantine clients that may report arbitrary statistics.
Considering more general connectivity constraints underlying communications, decentralized calibration via message passing over arbitrary graphs was proposed by \citep{wen2025distributed}, and distributed remote calibration protocols were studied in \citep{zhu2025conformal} for wireless sensor networks.

The state of the art on federated calibration summarized above did not target \emph{group-conditional} coverage guarantees. 
In federated settings, data can often be partitioned into, potentially overlapping, groups, which may reflect client-specific strata or cross-cutting attributes such as demographic or semantic categories (see Fig.~\ref{fig:system_model}).
For centralized settings, Mondrian CP~\citep{vovk2003mondrian} addresses group-conditional coverage over disjoint groups of covariates, while CondCP~\citep{gibbs2023conformal} allows for overlapping groups by reframing conditional coverage as simultaneous coverage over a class of covariate shifts.
Kandinsky CP~\citep{pmlr-v267-bairaktari25a} extended Mondrian CP and CondCP to groups that depend on both covariates and labels.
Other related research directions for centralized scenarios include localized coverage guarantees~\citep{guan2023localized}; learning improved conformity scores to reduce conditional miscoverage~\citep{xie2024boosted}; as well as analyses of sample-conditional validity for split conformal methods \citep{duchi2025sample}.

All this prior art on group-conditional CP has focused on centralized settings.
To address this knowledge gap, in this paper, we propose \rev{a federated, atom-stratified extension of CondCP~\citep{gibbs2023conformal}, termed \emph{group-conditional federated conformal prediction (GC-FCP)}, which enables communication-efficient and computation-efficient prediction-set construction under mild conditions, such as when the overlap between groups is not too severe.}
Unlike FedCF~\citep{srinivasan2025fedcf}, GC-FCP does not treat different groups separately, seeking simultaneous coverage across all groups. 

The main contributions are summarized as follows:
\begin{itemize}
    \item \rev{We first propose \emph{centralized GC-FCP}, an extension of CondCP~\citep{gibbs2023conformal} that achieves group-conditional coverage guarantees under a federated target mixture distribution.}
    \item \rev{We develop \emph{GC-FCP}, a federated protocol that compresses atom-stratified calibration scores into mergeable T-Digest~\citep{dunning2019computing} coresets, avoiding the loss of group-membership information caused by a single global sketch and double counting caused by overlapping group sketches.}
    \item We establish group-conditional coverage bounds for GC-FCP, making it explicit how the coreset compression level affects the achieved coverage.
    \item We validate the proposed methods on synthetic and real-world benchmarks, demonstrating that GC-FCP attains group-conditional reliability while substantially reducing computational overhead.
\end{itemize}

\section{Problem Setting}
\label{sec:problem_setting}

As illustrated in Fig.~\ref{fig:system_model}, we study a federated calibration setting that follows reference \citep{lu2023federated}. Accordingly, we consider a network consisting of $K$ clients that communicate with a central server. 
In this setup, each client $k\in\{1,\dots,K\}\triangleq [K]$ has access to a local calibration dataset
$
\mathcal{D}_k = \left\{(X_{i,k},Y_{i,k})\right\}_{i=1}^{n_k},
$
with data points $(X_{i,k},Y_{i,k})$ drawn i.i.d.\ from a client-specific distribution $P_k$ over $\mathcal{X}\times\mathcal{Y}$. 
We define a global calibration dataset as the union
$
\mathcal{D} = \bigcup_{k=1}^K \mathcal{D}_k,
$ with
$n = \sum_{k=1}^K n_k$.
As in \citep{lu2023federated}, the goal is to calibrate a shared pre-trained model $f:\mathcal{X}\mapsto\mathcal{Y}$ through communication with the central server.
Calibration is evaluated on test data $(X_{n+1},Y_{n+1})$ drawn from the mixture 
\begin{equation} \label{eq:mixture_model}
    \mathrm{P}= \sum_{k=1}^{K} \pi_k \mathrm{P}_k,
\end{equation}
for some arbitrary and known mixture weights $\pi_k \ge 0$ with $\sum_{k=1}^{K} \pi_k = 1$.
In practice, the weights $\{\pi_k\}_{k=1}^{K}$ dictate the relative relevance of the data of each client $k$ to the calibration of model $f$.
The vector $\pi$ should be interpreted as part of the deployment target, e.g., equal client, population, or policy-driven weighting, and it does not necessarily imply the empirical client-sample-size mixture.

Calibration aims at obtaining a set predictor
\begin{equation} \label{eq:abs_prediction_set}
\mathcal{C}(\cdot \mid \mathcal{D}) : \mathcal{X}\rightarrow 2^{\mathcal{Y}}.
\end{equation}
using the distributed calibration data via clients-to-server communication. 
Prior work \citep{lu2023federated} imposed the constraint that the prediction set \eqref{eq:abs_prediction_set} satisfies the marginal coverage condition 
\begin{equation} \label{eq:marginal_coverage}
    \mathbb{P}(Y_{n+1} \in \mathcal{C}(X_{n+1}\mid \mathcal{D})) \ge 1-\alpha
\end{equation}
with respect to the mixture distribution \eqref{eq:mixture_model}. 
In contrast, as explained next, we impose a more flexible conditional coverage condition based on grouping.

Let $\mathcal{G} \subseteq 2^{\mathcal{X}}$ be a finite collection of groups in the covariate space $\mathcal{X}$ and $\bigcup_{G\in \mathcal{G}}=\mathcal{X}$. Importantly, the groups are generally overlapping, i.e., there exist groups $G_i,G_j\in \mathcal{G}$ such that $G_i \bigcap G_j \neq \emptyset $ (see Fig.~\ref{fig:system_model}).
Groups may or may not represent client-specific partitions. 
In fact, some groups $G$ may correspond to data categories that are more representative of the distribution $\mathrm{P}_k$ of a given client $k$. 
However, it may also be that groups refer to categories that apply equally across all clients. We consider the set $\mathcal{G}$ to be arbitrary and given.

Given a target mis-coverage level $\alpha\in(0,1)$, calibration aims to construct set prediction $\mathcal{C}(\cdot\mid \mathcal{D})$ such that,
for every group $G\in\mathcal{G}$, the following group-conditional coverage requirement holds:
\begin{equation}
\mathbb{P}\left( Y_{n+1}\in \mathcal{C}(X_{n+1}\mid \mathcal{D}) \,\middle|\, X_{n+1}\in G \right)\ge 1-\alpha,
\label{eq:group_conditional_coverage}
\end{equation}
where the probability in \eqref{eq:group_conditional_coverage} is taken over the calibration data in $\mathcal{D}$ and the test data $(X_{n+1},Y_{n+1})$.
For condition \eqref{eq:group_conditional_coverage} to be meaningful, we assume that all groups $G\in \mathcal{G}$ satisfy the inequality $\mathbb{P}(X_{n+1} \in G) >0$ under the mixture distribution \eqref{eq:mixture_model}.
Note that this does not require that the inequality $\mathbb{P}_k(X\in G)>0$ holds for every client $k$. Rather, it only requires that at least one component with probability $\pi_k>0$ places positive mass on the group $G$.

{\color{black}
The guarantee \eqref{eq:group_conditional_coverage} applies to a prespecified finite group family.
This may include public common groups for which all clients can compute the same membership vector.
Private groups, as well as unknown or emergent groups, requiring separate discovery or auditing procedures, are outside the scope of this finite-group guarantee.
}

If the groups $\mathcal{G}$ were disjoint, the conditional coverage requirement \eqref{eq:group_conditional_coverage} could be obtained by combining FCP \citep{lu2023federated} and  Mondrian CP \citep{vovk2003mondrian}. 
Accordingly, one applies the FCP protocol separately for each group $G \in \mathcal{G}$.
However, ensuring the conditions \eqref{eq:group_conditional_coverage} become difficult when groups overlap, and we address this challenge in this paper.


\section{Preliminaries}
\label{sec:preliminaries}
To start, consider, for reference, a centralized split-conformal setting in which a server has access to i.i.d.\ calibration data $\{(X_i,Y_i)\}_{i=1}^{n}$ and a test covariate $X_{n+1}$. 
In this section, we review CondCP \citep{gibbs2023conformal}, which addresses the problem of satisfying the constraints \eqref{eq:group_conditional_coverage} in such a centralized setting. 

Let $s:\mathcal{X}\times\mathcal{Y}\mapsto\mathbb{R}$ be a score function derived from a predictive model $f$, and
define the calibration scores $S_i = s(X_i,Y_i)$ with $i=1,\dots,n$.
Furthermore, let $\ell_\alpha(\theta,S)$ denote the pinball loss at level $1-\alpha$:
\begin{equation} \label{eq:pinball_loss}
\ell_\alpha(\theta,S) =
\begin{cases}
(1-\alpha)(S-\theta), & S \ge \theta,\\
\alpha(\theta-S), & S < \theta.
\end{cases}
\end{equation}

Given the set of groups $\mathcal{G}$ (possibly overlapping), define the group-membership map $\Phi:\mathcal{X} \to \{0,1\}^{|\mathcal{G}|}$ as 
\begin{equation} \label{eq:membership_vector}
    \Phi(x) = \big(\mathbf{1}\{x \in G\}\big)_{G \in \mathcal{G}} \in \{0,1\}^{|\mathcal{G}|},
\end{equation}
which encodes membership of element $x \in \mathcal{X}$ belonging to each group of $\mathcal{G}$ as a binary indicator vector (where $\mathbf{1}\{\text{true}\}=1$ and $\mathbf{1}\{\text{false}\}=0$).
Then, CondCP defines the set of functions
\begin{equation}
\mathcal{F}_{\mathcal{G}}
=
\left\{
x \mapsto \beta^\top \Phi(x): \beta\in\mathbb{R}^{|\mathcal{G}|}
\right\},
\label{eq:group_linear_class}
\end{equation}
which corresponds to arbitrary linear combinations for the group indicators. 
Specifically, for any input test score $S\in\mathbb{R}$, CondCP solves the \emph{augmented quantile regression} problem:
\begin{multline}
\hat{g}_S
=
\arg\min_{g\in\mathcal{F}_{\mathcal{G}}} \
\left\{ \frac{1}{n+1}\sum_{i=1}^{n}\ell_\alpha\bigl(g(X_i),S_i\bigr) \right. \\ 
\left. +
\frac{1}{n+1}\ell_\alpha\bigl(g(X_{n+1}),S\bigr) \right\} .
\label{eq:condcp_augmented_qr}
\end{multline}
Note that the solution $\hat{g}_S$ is a function in class \eqref{eq:group_linear_class}. By the properties of the pinball loss, the solution $\hat{g}_S$ represents a covariate ($X_{n+1}$)-dependent version of the $(1-\alpha)$-empirical quantile of the augmented calibration scores $\{S_i\}_{i=1}^{n} \cup \{S\}$ \citep{gibbs2023conformal}.
The prediction set is then defined as the set of all labels $y \in \mathcal{Y}$ whose score $s(X_{n+1},y)$ is no larger than the corresponding empirical quantile $\hat{g}_{s(X_{n+1},y)}(X_{n+1})$, i.e.,
\begin{multline}
    \mathcal{C}(X_{n+1} \mid \mathcal{D}) \\ 
=
\left\{
y\in\mathcal{Y}:
s(X_{n+1},y)\le \hat{g}_{\,s(X_{n+1},y)}(X_{n+1})
\right\}.
\label{eq:condcp_set}
\end{multline}

The prediction set $\eqref{eq:condcp_set}$ is constructed by addressing an equivalent formulation of the convex problem \eqref{eq:condcp_augmented_qr} via dual methods (see Appendix~\ref{sec:dual-construction} for details). 
Under the assumption of i.i.d. calibration and test data, the set $\mathcal{C}(X_{n+1} \mid \mathcal{D})$ in \eqref{eq:condcp_set} satisfies group-conditional coverage \eqref{eq:group_conditional_coverage}.

\textbf{Communication overhead:} Solving problem~\eqref{eq:condcp_augmented_qr} at the server would require collecting all calibration scores and their group-membership features, yielding a communication overhead $\mathcal{O}(n)$.

\textbf{Computational overhead:} Solving problem \eqref{eq:condcp_augmented_qr} requires computational complexity of order $\mathcal{O}(n^{3/2}|\mathcal{G}|^2)$ using standard convex problem solvers~\citep{renegar1988newton,nesterov1994interior}.

\begin{table}[t]
\centering
\caption{\rev{Communication load and computational complexity. For GC-FCP, $\mathcal A_k^+$ is the set of non-empty atoms observed by client $k$, and $\mathcal A^+=\bigcup_k\mathcal A_k^+$.}
}   
\label{tab:complexity}
{\color{black}
\begin{tabular}{ccc}
\hline
\textbf{Method} & \textbf{Comm.} & \textbf{Comp.} \\
\hline
\begin{tabular}[c]{@{}c@{}}CondCP \& \\ Centralized \\ GC-FCP \end{tabular} &
$\mathcal{O}(n)$ &
$\mathcal{O}\left(n^{3/2}|\mathcal{G}|^2\right)$ \\
\hline
GC-FCP &
$\mathcal{O}\big(\delta\sum_{k}|\mathcal A_k^+|\big)$ &
$\mathcal{O}\big((\delta|\mathcal A^+|)^{3/2}|\mathcal{G}|^2\big)$ \\
\hline
\end{tabular}
}
\end{table}

\section{Group-Conditional Federated Conformal Prediction (GC-FCP)}
\label{sec:method}

This section introduces \emph{group-conditional federated conformal prediction} (GC-FCP), a federated calibration procedure designed to achieve group-conditional coverage \eqref{eq:group_conditional_coverage} over a prescribed, generally overlapping, collection of groups $\mathcal{G}$ under any mixture distribution \eqref{eq:mixture_model}.
In GC-FCP, each client computes calibration scores locally and communicates to the server only a compact, group-stratified summary, while the server efficiently constructs the CP set for each test point without accessing raw client-level calibration scores.

\subsection{Federated Augmented Quantile Regression}
\label{sec:centralized_gcfcp}
To start, consider an ideal setting in which the entire calibration dataset $\mathcal{D}$, which includes the datasets $\mathcal{D}_k$ from all clients $k \in [K]$, is available at the central server. Assume also no computational limitations at the server. 
Even in this simplified setup, as explained in the previous section, CondCP would fail to guarantee the conditional coverage requirements \eqref{eq:group_conditional_coverage}, since the data points in dataset $\mathcal{D}$ are not i.i.d. with respect to the mixture distribution \eqref{eq:mixture_model}.
We address this challenge in this section by proposing \emph{centralized GC-FCP}, a centralized calibration scheme that guarantees the desired condition \eqref{eq:group_conditional_coverage}.

As in Section~\ref{sec:preliminaries}, let $s:\mathcal{X}\times\mathcal{Y}\mapsto\mathbb{R}$ be a score function derived from the shared predictive model $f$.
We denote the calibration scores at client $k$ as $S_{i,k} = s(X_{i,k},Y_{i,k})$ with $ \ i\in\{1,\dots,n_k\}$ and $k\in[K]$.
Furthermore, given a test point $(X_{n+1},Y_{n+1}) \sim \mathrm{P}$, we denote its score by $S_{n+1} = s(X_{n+1},Y_{n+1})$.

For a test covariate $X_{n+1}$ and an input score value $S\in\mathbb{R}$, centralized GC-FCP solves the following
\emph{federated} augmented quantile regression estimator as a generalization of the CondCP problem \eqref{eq:condcp_augmented_qr}:
\begin{multline}
\hat{g}_{S}
=
\underset{g\in\mathcal{F}_{\mathcal{G}}}{\operatorname*{argmin}}
\sum_{k=1}^{K}\frac{\pi_k }{n_k+1}
\bigg(
\sum_{i=1}^{n_k}\ell_\alpha(g(X_{i,k}),S_{i,k}) \\ 
 +
\ell_\alpha(g(X_{n+1}),S)
\bigg),
\label{eq:gcfcp_augqr}
\end{multline}
where the function set $\mathcal{F}_{\mathcal{G}}$ is defined in \eqref{eq:group_linear_class}.
The objective in problem \eqref{eq:gcfcp_augqr} incorporates the scores of all $K$ clients, with each $k$-th term weighted by the factor $\pi_k/(n_k+1)$, thus matching the structure of the target mixture distribution \eqref{eq:mixture_model}.
Using this definition, centralized GC-FCP obtains the prediction set as in \eqref{eq:condcp_set}.

\begin{theorem}[Group-conditional coverage for centralized GC-FCP]
\label{thm:gcfcp_exact_group}
For every group $G\in\mathcal{G}$, the set \eqref{eq:condcp_set} with the empirical quantile \eqref{eq:gcfcp_augqr} produced by centralized GC-FCP satisfies the conditional coverage condition \eqref{eq:group_conditional_coverage}.
\end{theorem}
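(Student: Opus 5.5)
Following the CondCP argument of \citet{gibbs2023conformal}, I will use the first-order optimality conditions of the augmented quantile regression \eqref{eq:gcfcp_augqr}, evaluated at the true test score, and then exploit a per-client exchangeability argument. The weighting $\pi_k/(n_k+1)$ is what makes this work under the mixture \eqref{eq:mixture_model}.

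\textbf{Step 1: a latent client index.} I represent the test point as $(X_{n+1},Y_{n+1})$ drawn by first sampling a latent index $\kappa$ with $\mathbb{P}(\kappa=k)=\pi_k$, independently of $\mathcal{D}$, and then drawing $(X_{n+1},Y_{n+1})\sim \mathrm{P}_\kappa$. Conditionally on $\kappa=k$, the $n_k+1$ points $\mathcal{D}_k\cup\{(X_{n+1},Y_{n+1})\}$ are i.i.d., hence exchangeable, and they are independent of the other clients' data.

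\textbf{Step 2: stationarity at the true score.} Next I use that $y\in\mathcal{C}(X_{n+1}\mid\mathcal{D})$ if and only if $S_{n+1}\le \hat g_{S_{n+1}}(X_{n+1})$. So miscoverage is the event $S_{n+1}>\hat g_{S_{n+1}}(X_{n+1})$. Write $\hat g=\hat g_{S_{n+1}}$, and use the direction $\Phi_G(x)=\mathbf{1}\{x\in G\}$, which lies in $\mathcal{F}_{\mathcal{G}}$. The subgradient condition of the convex objective along this direction gives
\begin{equation*}
\sum_{k=1}^K\frac{\pi_k}{n_k+1}\sum_{j\in\mathcal{I}_k} \mathbf{1}\{X_j\in G\}\bigl(\mathbf{1}\{S_j>\hat g(X_j)\}-\alpha + \eta_j\bigr)\le 0 .
\end{equation*}
Here $\mathcal{I}_k$ indexes $\mathcal{D}_k$ together with the test point, and the terms $\eta_j\in[-1,0]$ account for interpolated points with $S_j=\hat g(X_j)$. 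Dropping the $\eta_j$ yields the inequality with $\mathbf{1}\{S_j>\hat g(X_j)\}$ alone. The test point enters every client's block, with total weight $\sum_k\pi_k/(n_k+1)$.

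\textbf{Step 3: averaging via exchangeability.} This is the main obstacle. I need $\mathbb{E}[\mathbf{1}\{X_{n+1}\in G\}\mathbf{1}\{S_{n+1}>\hat g(X_{n+1})\}]$ to equal a $\pi$-weighted average, over $k$, of the within-client averages over $\mathcal{I}_k$. The key fact is that, conditionally on $\kappa=k$, the fit $\hat g$ is a symmetric function of the $n_k+1$ points in $\mathcal{I}_k$. This holds because the objective treats the test point and $\mathcal{D}_k$'s points identically within block $k$, both with weight $\pi_k/(n_k+1)$. The test point does appear in the other blocks too, however, so symmetry only holds inside block $k$ after relabeling. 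I will check this carefully and adjust the argument accordingly, for instance by viewing the problem for $\kappa=k$ as one in which block $k$ contains the test point and the other blocks are fixed.

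Given that symmetry, exchangeability yields
\begin{equation*}
\mathbb{E}[\mathbf{1}\{X_{n+1}\in G,\, S_{n+1}>\hat g(X_{n+1})\}\mid\kappa=k]=\mathbb{E}\Bigl[\tfrac{1}{n_k+1}\textstyle\sum_{j\in\mathcal{I}_k}\mathbf{1}\{X_j\in G,\,S_j>\hat g(X_j)\}\Bigr].
\end{equation*}
The analogous identity holds for $\mathbf{1}\{X_j\in G\}$. Multiplying by $\pi_k$, summing over $k$, and taking expectations in the Step 2 inequality then gives
\begin{equation*}
\mathbb{P}(X_{n+1}\in G,\,Y_{n+1}\notin\mathcal{C})\le\alpha\,\mathbb{P}(X_{n+1}\in G).
\end{equation*}
Dividing by $\mathbb{P}(X_{n+1}\in G)>0$ yields \eqref{eq:group_conditional_coverage}.

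If the exact symmetry in Step 3 fails, my first fallback is to redefine the expectation over a randomized block assignment of the test point, matching the weights $\pi_k/(n_k+1)$.
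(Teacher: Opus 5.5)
Your Step~3 is a genuine gap, and it is the crux of the proof rather than a detail.

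\textbf{The symmetry you need does not hold.} In \eqref{eq:gcfcp_augqr} the test point enters every client block, so it carries total weight $w=\sum_{k'}\lambda_{k'}$, where $\lambda_{k'}=\pi_{k'}/(n_{k'}+1)$. Each point of $\mathcal{D}_k$ carries only $\lambda_k$ and appears only in block $k$. Swapping $(X_{n+1},S_{n+1})$ with some $(X_{i,k},S_{i,k})$ therefore changes the objective: the swapped-in point now gets weight $w$ and sits in every block. So under $\kappa=k$ the fit $\hat g$ is not invariant under permutations of $\mathcal{I}_k$, not even ``inside block $k$ after relabeling''. Freezing the other blocks does not help either, because they contain copies of the test point. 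The objective is symmetric in $\mathcal{I}_k$ only when $w=\lambda_k$, i.e.\ when no other client has positive weight. Hence your displayed identity for $\mathbb{E}[\mathbf{1}\{X_{n+1}\in G,\,S_{n+1}>\hat g(X_{n+1})\}\mid\kappa=k]$ does not follow from exchangeability.

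\textbf{The aggregation also fails, independently.} Write $h_j=\mathbf{1}\{X_j\in G\}(\mathbf{1}\{S_j>\hat g(X_j)\}-\alpha)$. You would need $\sum_k\mathbb{E}[\lambda_k\sum_{j\in\mathcal{I}_k}h_j\mid\kappa=k]\le 0$. Step~2 only gives, for each $k$ separately, $\mathbb{E}[\sum_{k'}\lambda_{k'}\sum_{j\in\mathcal{I}_{k'}}h_j\mid\kappa=k]\le 0$. The cross terms with $k'\neq k$ are blocks that also contain the $\mathrm{P}_k$-distributed test point, and they do not cancel, because your per-$k$ identities live under $K$ different laws. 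The mismatch is already visible on the right-hand side. Under the true law, $\mathbb{E}[\sum_k\lambda_k\sum_{j\in\mathcal{I}_k}\mathbf{1}\{X_j\in G\}]=\sum_k\lambda_k n_k\mathrm{P}_k(X\in G)+w\,\mathbb{P}(X_{n+1}\in G)$. In general this is not $\mathbb{P}(X_{n+1}\in G)=\sum_k\lambda_k(n_k+1)\mathrm{P}_k(X\in G)$. Your fallback of randomizing the block assignment addresses neither issue, since the estimator is deterministic and places the test point in all blocks.

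\textbf{The paper's proof does not close this gap.} The proof in Appendix~\ref{sec:proof-thm-exact-group} takes the same route: the first-order condition plus ``exchangeability on client $k$ under $E_k$'' in step (a). It uses a single fit $\theta^*$ for every $k$ and replaces $\mathbb{P}(E_k\mid\mathcal{E})$ by $\pi_k$, so it passes over exactly this point. Mirroring it will not fix your argument.

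\textbf{What works in a special case.} When $\hat g(X_{n+1})$ is a scalar weighted quantile, i.e.\ for $\mathcal{G}=\{\mathcal{X}\}$ or disjoint groups, the following argument goes through.
\begin{itemize}
\item Under $\kappa=k$, add an independent phantom draw $Z_{k'}\sim\mathrm{P}_{k'}$ to each block $k'\neq k$.
\item The fully augmented data then have one common law for every $k$ and are exchangeable within every block.
\item Miscoverage in the actual fit implies that $S_{n+1}$ strictly exceeds the $(1-\alpha)$-quantile of the augmented weighted scores in its group. The reason is that the surplus weight $w-\lambda_k$ sits at $S_{n+1}$. It therefore counts toward the total mass but never toward the mass strictly below $S_{n+1}$, and this dominates anything the phantoms add.
\item The usual quantile counting then gives the bound $\alpha\,\mathbb{P}(X_{n+1}\in G)$.
\end{itemize}
With overlapping groups, the fitted value at $X_{n+1}$ is coupled to other atoms through $\beta$. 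A phantom in another atom can move it by more than the surplus test weight does, so this pointwise comparison need not hold, and a genuinely new idea is required.

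(Minor: in Step~2 the tie multipliers should lie in $[0,1]$, not $[-1,0]$. With your sign, dropping them does not yield $\le 0$, although the resulting inequality is correct via the one-sided directional derivative.)
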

\textit{Proof:} See Appendix \ref{sec:proof-thm-exact-group}.

As summarized in Table~\ref{tab:complexity}, the centralized GC-FCP shares the same communication and computational overheads as CondCP.

\subsection{Group-Conditional Federated Conformal Prediction}
\label{sec:gc-fcp}

The previous subsection focused on a centralized version of GC-FCP, which solves problem \eqref{eq:gcfcp_augqr} using the calibration scores pooled from all clients.
To mitigate the communication and computational overhead associated with this approach, we now introduce GC-FCP, a distributed protocol that replaces the full calibration dataset with mergeable atom-based coresets.
As summarized in Table~\ref{tab:complexity}, this protocol reduces the communication load and enables a more efficient server-side optimization in conditions to be elaborated.

GC-FCP starts by applying a coreset construction based on T-Digest \citep{dunning2019computing}, which is a sketching algorithm for computing approximations of quantiles, and is elaborated in the sequel. 
The sketches produced by T-Digest are then used to solve the optimization problem \eqref{eq:gcfcp_augqr} at the central server. 
While T-Digest was also used by FCP \citep{lu2023federated}, GC-FCP must additionally account for the structure of the groups in the set $\mathcal{G}$, producing stratified structures (see Fig.~\ref{fig:atom_illustration}).

\subsubsection{T-Digest}\label{subsubsec:tdigest}
Let $\{(R_i,w_i)\}_{i=1}^\ell$ be weighted real-valued samples with $R_i\in\mathbb{R}$ and weights $w_i>0$, and define the total weight as $W=\sum_{i=1}^\ell w_i$. The associated weighted empirical cumulative distribution function (CDF) is
$
F(t)=\frac{1}{W}\sum_{i=1}^\ell w_i\,\mathbf{1}\{R_i\le t\},
$
with generalized empirical quantile function $Q(u)=\inf\{t:F(t)\ge u\}$ for $u\in[0,1]$.

\textbf{T-Digest:}
A \emph{T-Digest} \citep{dunning2019computing} produces an ordered collection of $m < \ell$ clusters $\mathcal{R}_c\subseteq\{1, \ldots, \ell\}$, i.e., if $c_1<c_2$, $R_i \le R_j$ for any $i \in \mathcal{R}_{c_1}$ and $j\in \mathcal{R}_{c_2}$. Cluster $\mathcal{R}_c$ is described by a cluster representative $\bar R_c$ and an aggregate weight $W_c$.
Each cluster $\mathcal{R}_c$ includes a subset of samples $\{R_i\}_{i=1}^\ell$ with total weight
$
W_c = \sum_{i\in \mathcal{R}_c}w_i,
$
and the weighted cluster means
$
\bar R_c = \frac{1}{W_c}\sum_{i\in \mathcal{R}_c}w_i R_i.
$
In summary, T-Digest returns the weights and the cluster means as
\begin{equation}
\mathsf{TD}=\big\{(\bar R_c,W_c)\big\}_{c=1}^{m}.
\end{equation}

It is worth noting that T-Digest enforces the ordering $\bar{R}_1 \le \bar{R}_2 \le \dots \le \bar{R}_{m}$, producing ordered cluster means.
To this end, it orders the original set $\{R_i\}_{i=1}^{\ell}$ such that we have $R_1 \le R_2 \le \dots\le R_\ell$. 
Clusters are then constructed greedily as follows. 
To elaborate, let $V_c=\sum_{i=1}^c W_i$ with $V_0=0$, and define the empirical left and right quantile boundaries of cluster $c \in \{1,\ldots,m \}$ as
\begin{equation}
q_c^{\mathrm{L}}=\frac{V_{c-1}}{W}\ \text{ and }\ q_c^{\mathrm{R}}=\frac{V_c}{W}.
\end{equation}
Samples $R_1,R_2,\ldots,R_\ell$ are aggregated in an increasing order into the current cluster $\mathcal{R}_c$ as long as adding the next sample preserves the constraint
\begin{equation}\label{eq:td_size_criterion}
r\big(q_c^{\mathrm{R}}\big)-r\big(q_c^{\mathrm{L}}\big)\le 1,\qquad c=1,\ldots,m.
\end{equation}
with scale function
\begin{equation}\label{eq:td_scale}
r(q)=\frac{\delta}{2\pi}\arcsin(2q-1).
\end{equation}
for some parameter $\delta > 0$ that controls the level of compression.

Intuitively, the condition \eqref{eq:td_size_criterion} enforces finer resolution near the tails of the empirical distribution of the scalars $\{R_i\}_{i=1}^{\ell}$, while permitting larger clusters near the median. 
By this construction, the number of retained clusters scales as $m =\Theta(\delta)$ \citep{dunning2019computing}.

 From the digest $\mathsf{TD}=\big\{(\bar R_c,W_c)\big\}_{c=1}^{m}$, the approximate quantile function for $u\in[0,1]$ is given by 
 \begin{equation} \label{eq:tdigest_approximate_quantile}
    \widehat Q(u)=\inf\{t:\widehat F(t)\ge u\},
\end{equation}
where
 \begin{equation} \label{eq:td_approximate_cdf}
    \widehat F(t)\ =\ \frac{1}{W}\sum_{c=1}^{m} W_c\,\mathbf 1\{\bar R_c\le t\}.
\end{equation} 
is an estimate of the empirical CDF $F(t)$.
The quality of this estimate will be analyzed in Section~\ref{sec:theoretical_guarantee}.

\textbf{Merging T-Digests:}
A key property of T-Digest is that digests can be merged and then re-compressed such that the number of retained clusters
remains $\Theta(\delta)$ \citep{dunning2019computing}.
Let
$\mathsf{TD}^{(j)}=\{(\bar R^{(j)}_c,W^{(j)}_c)\}_{c=1}^{m_j}$
be digests obtained from disjoint datasets indexed by $j = 1,\ldots, J$.
To merge them, one applies the same procedure discussed above to the pooled samples $\bigcup_{j=1}^{J} \mathsf{TD}^{(j)} $. We denote the merged digest as
$\mathrm{Merge}(\mathsf{TD}^{(1)},\ldots,\mathsf{TD}^{(J)})$.

\subsubsection{GC-FCP} \label{subsubsec:gc-fcp}

\begin{figure}[t]
    \centering
    \includegraphics[width=\linewidth]{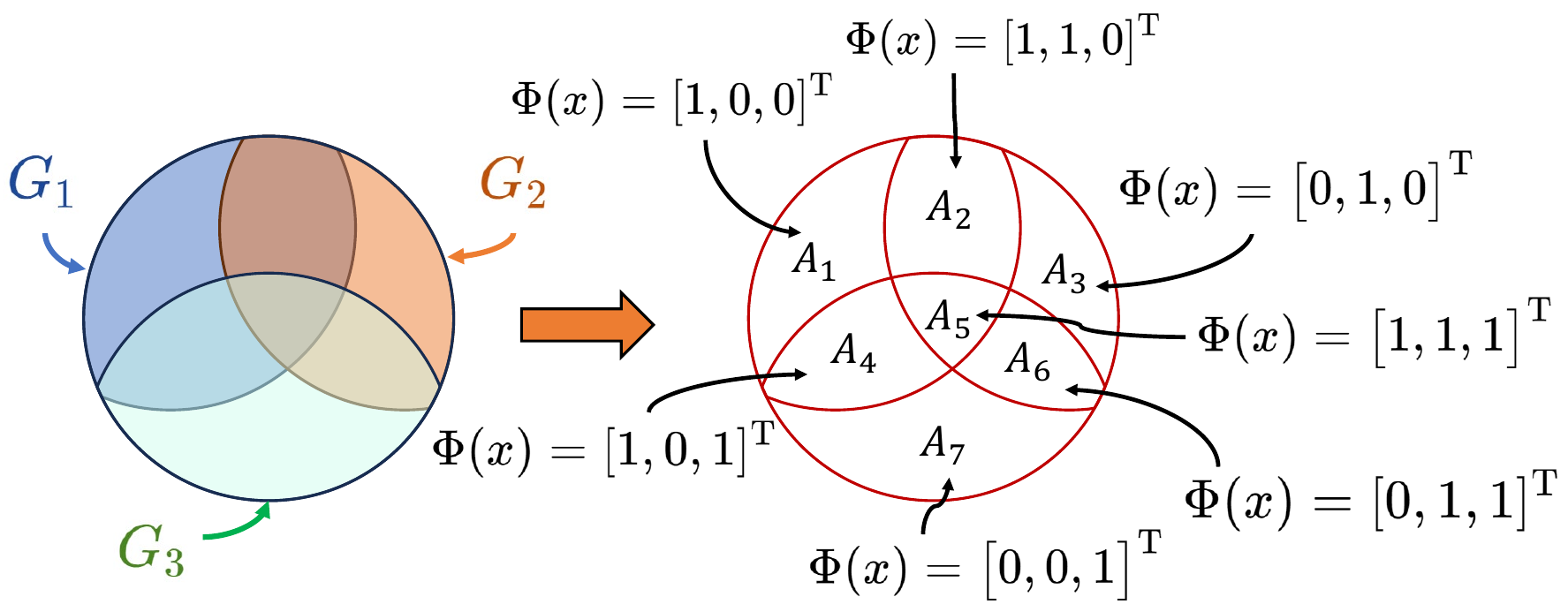}
    \caption{Illustration of the atom partitions applied by GC-FCP. Given the set of overlapping groups $\mathcal G=\{G_1,G_2,G_3\}$, the resulting $7$ non-empty atoms $\mathcal{A}=\{A_1,\ldots,A_7\}$ are shown on the right, together with the corresponding group-membership vector \eqref{eq:membership_vector}.}
    \label{fig:atom_illustration}
\end{figure}

In order to address the group-conditional constraint \eqref{eq:group_conditional_coverage}, GC-FCP first stratifies the data at the clients into \emph{disjoint atoms} based on unique group membership patterns. Independent T-Digests are then constructed per atom, and used at the server as coresets to approximate the solution of the quantile regression problem \eqref{eq:gcfcp_augqr}. 

\textbf{Atoms construction:}
As illustrated in Fig.~\ref{fig:atom_illustration}, GC-FCP first partitions the input domain $\mathcal{X}$ into a collection $\mathcal{A}=\{A\}_{A\in \mathcal{A}}$ of atoms, which are defined as $\bigcup_{A\in \mathcal{A}}A=\mathcal{X}$ and $A\bigcap A^\prime = \emptyset$, for any pair $A,A^\prime \in \mathcal{A}$. 
The collection $\mathcal{A}$ corresponds to the smallest-cardinality partition of the set $\mathcal{X}$ such that all the sets $\mathcal{G}$ can be recovered via union operations on its atoms. 
Formally, for each subset of groups, $\mathcal S\subseteq \mathcal G$, define 
\begin{equation} \label{eq:atom_definition}
    A(\mathcal S)=\left(\bigcap_{G\in \mathcal S} G\right)\bigcap\left(\bigcap_{G\in \mathcal G\setminus \mathcal S} G^c\right) 
\end{equation}
as the intersection of all groups in subset $\mathcal S$ and the complement of all groups not in subset $\mathcal S$.
The construction $\mathcal{A}=\{A(\mathcal S)\}_{\mathcal S\subseteq\mathcal G}$ yields the collection of atoms \citep{kallenberg2002foundations}.
Although arbitrary overlaps can induce up to $2^{|\mathcal G|}$ atoms, GC-FCP is efficient for many practical group settings.
For example, 
for one-dimension interval groups, the number of atoms scales linearly with $|\mathcal{G}|$.


By construction, each atom $A$ corresponds to all points $x\in\mathcal{X}$ with the same membership vector $\Phi(x)$. 
Accordingly, for any $x\in A$, we can write $\Phi_{A} = \Phi(x)$.
An example is illustrated in Fig.~\ref{fig:atom_illustration}.

\textbf{Local-score partition and digest:}
We now partition the set of local scores at client $k\in[K]$ into $|\mathcal{A}|$ subsets $\mathcal{D}_{k,A}$, one per atom $A$ as
\begin{equation}
\mathcal{D}_{k,A} = \left \{s(X,Y): X \in A,\ (X,Y)\in \mathcal{D}_k \right \}.
\end{equation}
{\color{black}
Define the active local and global atom sets as
\begin{equation}
\mathcal A_k^+=\{A\in\mathcal A:\mathcal D_{k,A}\neq\emptyset\},
\qquad
\mathcal A^+=\bigcup_{k=1}^{K}\mathcal A_k^+,
\end{equation}
which satisfy the inequalities $|\mathcal A^+|\le \min\{2^{|\mathcal G|},n\}$ and $|\mathcal A_k^+|\le \min\{2^{|\mathcal G|},n_k\}$.
Empty atoms produce no digest.
}
\rev{Each client $k\in[K]$ builds a separate digest $\mathsf{TD}_{k,A}$ upon the scores $\mathcal{D}_{k,A}$ for each active atom $A\in\mathcal A_k^+$ with equal weights $w=\pi_k/(n_k+1)$.}
Following Section~\ref{sec:gc-fcp}, the resulting digest $\mathsf{TD}_{k,A}$ consists of a set of means and weights with $m_{k,A}=\Theta(\delta)$ clusters 
\begin{equation}
\mathsf{TD}_{k,A} = \left\{ \left( \bar{S}_{k,A,c},\, W_{k,A,c} \right) \right\}_{c=1}^{m_{k,A}},
\label{eq:local_digest_clusters}
\end{equation}
where
$
    \bar{S}_{k,A,c} = \frac{1}{W_{k,A,c}} \sum_{i \in \mathcal{S}_{k,A,c}} \frac{\pi_k}{n_k+1} S_i 
$
is the weighted mean of the scores in cluster $\mathcal{S}_{k,A,c}$, and
$
    W_{k,A,c} = |\mathcal{S}_{k,A,c}| \frac{\pi_k}{n_k+1}
$
is the aggregated weight.
    
\rev{Next, client $ k \in [K] $ transmits the digest $ \mathsf{TD}_{k,A} $ along with the identifier of active atom $A\in\mathcal A_k^+$ to the server.}
\rev{For each atom $ A \in \mathcal{A}^+ $, the server merges the received digests $ \{\mathsf{TD}_{k,A}\}_{k=1}^K $ to obtain a global digest}
\begin{equation}
\begin{aligned}
\mathsf{TD}_{A} 
& =\mathrm{Merge}\bigl( \mathsf{TD}_{1,A},\ldots,  \mathsf{TD}_{K,A}\bigr) \\ 
& = \left\{ \left( \bar{S}_{A,c},\, W_{A,c} \right) \right\}_{c=1}^{m_A},
\end{aligned}
\label{eq:merged_digest_clusters}
\end{equation}
with $ m_A = \Theta(\delta) $ clusters, where $ \mathrm{Merge}(\cdot) $ denotes the merge operation on $\{\mathsf{TD}_{k,A}\}_{k=1}^K$. 
As a result, the union over the digests of all atoms yields the final coreset:
\begin{equation}
\widetilde{\mathcal{D}} = \left\{ \left( A, \bar{S}, W \right) :\left( \bar{S},\, W \right)\in \mathsf{TD}_{A},\  A \in \mathcal{A} \right\}
\label{eq:coreset_def}
\end{equation}
with $|\widetilde{\mathcal{D}}|=\Theta(\delta|\mathcal{A}^+|)$.
Note that each entry $( A, \bar{S}, W )$ of the coreset includes the identifier of the atom $A$, the mean score $\bar{S}$, and the corresponding weight $W$.
{\color{black}
The uncompressed federated implementation would transmit all atom-stratified scores, and solve the same objective as centralized GC-FCP, thus requiring $\mathcal O(n)$ score records plus atom identifiers.
GC-FCP replaces the full dataset with the atom-stratified mergeable coreset \eqref{eq:coreset_def}.
}

\textbf{Set construction:}
Given a test score $ S $, GC-FCP solves problem \eqref{eq:gcfcp_augqr} using the coreset $\widetilde{\mathcal{D}}$ instead of the original pooled data $\mathcal{D}$, i.e.,
\begin{multline}
\tilde{\beta}(S) = \underset{\beta \in \mathbb{R}^{|\mathcal{G}|}}{\operatorname*{argmin}} \sum_{(A, \bar{S}, W) \in \widetilde{\mathcal{D}}} W \, \ell_\alpha\left( \beta^\top \Phi_{A},\, \bar{S} \right) \\
+ \left( \sum_{k=1}^K \frac{\pi_k}{n_k + 1} \right) \ell_\alpha\left( \beta^\top \Phi(X_{n+1}),\, S \right).
\label{eq:coreset_beta_augqr}
\end{multline}

Finally, the prediction set is constructed as
\begin{multline}
\mathcal{C}(X_{n+1} \mid \widetilde{\mathcal{D}}) \\ = \left\{ y \in \mathcal{Y} : s(X_{n+1}, y) \le \tilde{g}_{s(X_{n+1}, y)}(X_{n+1}) \right\}.
\label{eq:gcfcp_set_coreset}
\end{multline}
where $ \tilde{g}_S(x) =  \tilde{\beta}(S)^\top \Phi(x) $.

The proposed GC-FCP is summarized in Algorithm~\ref{alg:gcfcp}.

\textbf{Communication overhead:}
Solving problem~\eqref{eq:coreset_beta_augqr} at the server requires collecting active atom digests, yielding total communication load of order $\mathcal{O}(\delta\sum_{k=1}^{K}|\mathcal A_k^+|)$, compared to order $\mathcal{O}(n)$ for score-based communication of CondCP or centralized GC-FCP.

\textbf{Computational overhead:}
On the server side, solving the dual of problem~\eqref{eq:coreset_beta_augqr} requires computational complexity of order $\mathcal{O}(|\widetilde{\mathcal{D}}|^{3/2}|\mathcal{G}|^2) = \mathcal{O}\!\left((\delta|\mathcal A^+|)^{3/2}|\mathcal{G}|^2\right)$ by standard convex problem solvers~\citep{renegar1988newton,nesterov1994interior}, compared to the order $\mathcal{O}(n^{3/2}|\mathcal{G}|^2)$ of CondCP or centralized GC-FCP.
On client $k \in [K]$, membership computation costs $\mathcal O(n_k|\mathcal G|)$. 

\textbf{Memory overhead:}
As the digest memory scales as $\mathcal O(\delta|\mathcal A_k^+|)$, the server needs to store $\mathcal O(\delta|\mathcal A^+|)$ merged centroids.

\textbf{Trust and Privacy:}
GC-FCP keeps raw calibration examples local, but does not provide any formal privacy guarantee.
In particular, active membership patterns, atom-based counts, centroid weights, and approximate score distributions etc. may be exposed by T-Digests.
The present analysis therefore assumes a trustworthy server.
Privacy-preserving extensions possibly incorporate secure aggregation or differentially private quantile sketches~\citep{dwork2014algorithmic,alabi2022bounded} with potential compromise to coverage, which is left for future work.

\begin{algorithm}[t]
\caption{GC-FCP}
\label{alg:gcfcp}
\begin{algorithmic}
\STATE \textbf{Input:} Clients' calibration sets $\{\mathcal{D}_k\}_{k=1}^K$; groups $\mathcal{G}$; score $s(\cdot,\cdot)$; mis-coverage level $\alpha$; mixture weights $\pi_k$; T-Digest compression parameter $\delta$.
\STATE $\triangleright$ \texttt{Client side (in parallel): }
\FOR{device $k \in \{1,\ldots,K\}$}
\STATE \rev{Construct T-Digest $\mathsf{TD}_{k,A}$ for each active atom $A \in \mathcal{A}_k^+$ using $\mathcal{D}_{k,A}$.}
\STATE \rev{Transmit T-Digests $\{\mathsf{TD}_{k,A}\}_{A\in \mathcal{A}_k^+}$ to the central server.}
\ENDFOR
\STATE $\triangleright$ \texttt{Server side: }
\STATE \rev{Merge received digests for each active atom $A \in \mathcal{A}^+$ via \eqref{eq:merged_digest_clusters}.}
\STATE Solve \eqref{eq:coreset_beta_augqr} and compute the prediction set $\eqref{eq:gcfcp_set_coreset}$.
\STATE \textbf{Output:} $\mathcal{C}(\cdot\mid\widetilde{\mathcal{D}})$
\end{algorithmic}
\end{algorithm}

\section{Coverage Guarantees of GC-FCP}
\label{sec:theoretical_guarantee}
The key property of GC-FCP is its ability to provide group-conditional coverage guarantees \eqref{eq:group_conditional_coverage} for the prediction set. 
Deriving finite-sample group-conditional guarantees for GC-FCP is technically challenging, since one must simultaneously account for the non-exchangeability of the samples in the coreset and for the approximation error introduced by sketching via the local digest.
To this end, we first analyze the quantile estimation accuracy of T-Digest, and then provide a group-conditional coverage bound for GC-FCP.

\subsection{Quantile Accuracy of T-Digest}
We start by deriving a relationship between the compression parameter $\delta$ used in the scale function \eqref{eq:td_scale} by T-Digest for compression, which dictates the number of clusters $m=\Theta(\delta)$, and the accuracy of the approximate quantile \eqref{eq:tdigest_approximate_quantile}.

\begin{lemma}[Uniform bound of T-Digest]\label{lemma:tdigest_accuracy}
Defining as $F(t)$ the true CDF of original samples, the CDF estimate \eqref{eq:td_approximate_cdf} produced by T-Digest satisfies the uniform accuracy bound
\begin{equation}
\sup_{t\in\mathbb R}\bigl|F(t)-\widehat F(t)\bigr|
\le 
\sin\Bigl(\frac{\pi}{\delta}\Bigr),
\end{equation}
with $\delta \ge 2$.
Moreover, 
for all $u\in[0,1]$, the quantile estimate \eqref{eq:td_approximate_cdf} satisfies the inequality
$
| F(\widehat Q(u)) - u | \le \sin(\Myfrac{\pi}{\delta}).
$
\end{lemma}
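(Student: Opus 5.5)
The plan is to reduce the uniform CDF error to a bound on the largest cluster mass $W_c/W$, and then bound that mass using the scale-function constraint \eqref{eq:td_size_criterion}.

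\textbf{Step 1 (error is at most one cluster mass).} Because the clusters are contiguous in the sorted order, the cluster structure pins down $F$ at the cluster boundaries. For every $c$, once all samples of clusters $1,\dots,c$ are at most $t$ and none of the later ones are, both $F(t)$ and $\widehat F(t)$ equal $q_c^{\mathrm R}=V_c/W$. Now fix $t$ and let $c$ be the unique cluster with $\min_{i\in\mathcal R_c}R_i\le t<\max_{i\in\mathcal R_c}R_i$; if no such cluster exists, $F(t)=\widehat F(t)$. Then both $F(t)$ and $\widehat F(t)$ lie in $[q_c^{\mathrm L},q_c^{\mathrm R}]$. The reason is that the mean $\bar R_c$ lies inside the range of its cluster, so $\widehat F(t)\in\{q_c^{\mathrm L},q_c^{\mathrm R}\}$, while $F(t)$ counts only part of cluster $c$. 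Hence
\[
|F(t)-\widehat F(t)|\le q_c^{\mathrm R}-q_c^{\mathrm L}=W_c/W .
\]

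\textbf{Step 2 (bound the cluster mass via the scale function).} Maximize $q^{\mathrm R}-q^{\mathrm L}$ subject to $r(q^{\mathrm R})-r(q^{\mathrm L})\le 1$. Substitute $q=(1+\sin\phi)/2$, so that $r(q)=\delta\phi/(2\pi)$ with $\phi\in[-\pi/2,\pi/2]$. The constraint becomes $\phi_R-\phi_L\le 2\pi/\delta$. The objective becomes
\[
\tfrac12(\sin\phi_R-\sin\phi_L)=\cos\!\Big(\tfrac{\phi_R+\phi_L}{2}\Big)\sin\!\Big(\tfrac{\phi_R-\phi_L}{2}\Big)\le \sin\!\Big(\tfrac{\pi}{\delta}\Big).
\]
The last step uses $\delta\ge 2$, which keeps $(\phi_R-\phi_L)/2\le\pi/\delta\le\pi/2$, a range where $\sin$ is monotone. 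The maximum is attained by a cluster centered at the median.

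\textbf{Main obstacle: singleton clusters.} The greedy rule may be forced to open a cluster containing a single heavy sample that on its own violates \eqref{eq:td_size_criterion}. For such a singleton there is no within-cluster error, since $F$ and $\widehat F$ jump at the same point, so Step 1 applies only to clusters with at least two samples. Those clusters do satisfy the constraint, so the bound holds. I would state this caveat explicitly; with the equal weights used per atom and $n$ moderate it is typically harmless.

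\textbf{Quantile statement.} Let $t^\ast=\widehat Q(u)$. This is a representative $\bar R_c$ with $\widehat F(t^\ast)\ge u$ and $\widehat F(t^\ast-)<u$. Applying the sup bound at $t^\ast$ and in the left limit (the argument of Step 1 also controls $F(t-)$) gives
\[
F(t^\ast)\ge u-\sin(\pi/\delta).
\]
For the reverse inequality, $\widehat F$ jumps at $t^\ast$ by one cluster mass $W_c/W$. Using Step 1, $F(t^\ast)$ lies in $[q_c^{\mathrm L},q_c^{\mathrm R}]$, and $q_c^{\mathrm L}<u\le q_c^{\mathrm R}$. Hence $|F(t^\ast)-u|\le W_c/W\le\sin(\pi/\delta)$.
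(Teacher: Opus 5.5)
Your proof of the uniform CDF bound follows the paper. Step~1 is Lemma~\ref{lem:tdigest_cdf_mass} and Step~2 is Lemma~\ref{lem:tdigest_mass_arcsine}, with two improvements. Your sum-to-product identity actually proves the maximization that the paper only asserts ``by symmetry.'' Your singleton remark correctly shows that the CDF bound survives when the greedy rule must close a singleton violating \eqref{eq:td_size_criterion}; the paper rules this case out by asserting \eqref{eq:td_size_criterion} for every cluster. For the quantile half you take a genuinely different route, and it is the sounder one. The paper goes through the generic Lemma~\ref{lem:cdf_to_quantile}. Its upper bound lets $\eta\to 0$ in $F(t^\ast-\eta)<u+\epsilon$ and appeals to right-continuity, but that limit is only the left limit $F(t^\ast-)$. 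In fact the generic claim is false for CDFs with atoms: take $F=\widehat F$ a point mass at $0$, so $\epsilon=0$, $\widehat Q(1/2)=0$ and $F(0)=1$. Your argument instead uses that $t^\ast$ is a cluster mean, so $F(t^\ast)$ and $u$ lie in the same interval $(q_c^{\mathrm L},q_c^{\mathrm R}]$. That structural fact is exactly what the upper bound needs.

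That step does rest on two unstated hypotheses.

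First, it needs distinct sample values. With ties, several clusters can share the mean $t^\ast$, so $\widehat F$ jumps there by more than one cluster mass. Samples of the next cluster can also equal $t^\ast$, so $F(t^\ast)$ can exceed $q_c^{\mathrm R}$. If all samples are equal, $F(\widehat Q(u))=1$ for every $u\in(0,1]$, so the quantile claim itself fails.

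Second, your singleton caveat does not stay confined to Step~1. The final inequality $W_c/W\le\sin(\pi/\delta)$ is false when the cluster containing $u$ is a singleton violating \eqref{eq:td_size_criterion}. For instance, take $\ell<1/\sin(\pi/\delta)$ distinct equal-weight samples. By your Step~2, no cluster of mass at least $1/\ell$ can satisfy \eqref{eq:td_size_criterion}, so every cluster is forced to be a singleton and $\widehat F=F$. Then $F(\widehat Q(u))-u$ comes arbitrarily close to $1/\ell>\sin(\pi/\delta)$.

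So your proof is complete under the paper's standing model, where every cluster obeys \eqref{eq:td_size_criterion}, provided the sample values are distinct. Under your relaxed greedy rule it gives $|F(\widehat Q(u))-u|\le\max\{\sin(\pi/\delta),\max_i w_i/W\}$. Replace the ``typically harmless'' remark with that explicit bound.
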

\begin{proof}
    See Appendix~\ref{app:tdigest_accuracy}.
\end{proof}


\subsection{Group-Conditional Coverage Guarantees of GC-FCP}

Based on properties of T-Digest presented in Lemma~\ref{lemma:tdigest_accuracy}, we can now prove the group-conditional coverage guarantees of GC-FCP.

\begin{theorem}[Group-conditional coverage guarantees for GC-FCP] \label{thm:lower_bound}
For each group $ G \in \mathcal{G} $, the prediction set $\mathcal{C}(X_{n+1} \mid \widetilde{\mathcal{D}})$ produced by GC-FCP satisfies the inequality
\begin{multline} \label{eq:lower_bound_gcfcp}
\mathbb{P}(Y_{n+1} \in \mathcal{C}(X_{n+1} \mid \widetilde{\mathcal{D}}) \mid X_{n+1} \in G) \\
\geq 1 - \alpha - \sin\!\left(\frac{\pi}{\delta}\right).
\end{multline}
\end{theorem}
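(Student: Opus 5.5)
We follow the CondCP/Theorem~\ref{thm:gcfcp_exact_group} route: read off first-order optimality conditions of the coreset problem \eqref{eq:coreset_beta_augqr} in the direction of each group indicator, then convert the coreset sum into the true weighted empirical quantities. The conversion is done atom by atom, using Lemma~\ref{lemma:tdigest_accuracy}.

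\textbf{Step 1 (subgradient inequality).} Fix $G\in\mathcal G$ and set $S=S_{n+1}$. Perturb $\tilde\beta(S_{n+1})$ along the coordinate of $G$, i.e.\ $g\mapsto g+\epsilon\mathbf 1\{x\in G\}$, and take one-sided derivatives as $\epsilon\to0^\pm$. With $w_{n+1}=\sum_k \pi_k/(n_k+1)$, this yields
\[
\sum_{(A,\bar S,W)\in\widetilde{\mathcal D},\,A\subseteq G} W\,\mathbf 1\{\bar S>\tilde g(A)\}+w_{n+1}\mathbf 1\{X_{n+1}\in G\}\mathbf 1\{S_{n+1}>\tilde g(X_{n+1})\}
\le \alpha\Big(\sum_{A\subseteq G}W_A^{\rm tot}+w_{n+1}\mathbf 1\{X_{n+1}\in G\}\Big),
\]
where $\tilde g(A)=\tilde\beta^\top\Phi_A$ is constant on each atom.

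\textbf{Step 2 (digest error per atom).} Within atom $A$, the left-hand sum is $W_A^{\rm tot}\bigl(1-\widehat F_A(\tilde g(A))\bigr)$. Lemma~\ref{lemma:tdigest_accuracy} applies to the merged weighted digest $\mathsf{TD}_A$ relative to its underlying weighted scores, so we can replace $\widehat F_A$ by the true weighted empirical CDF $F_A$ at a cost of $W_A^{\rm tot}\sin(\pi/\delta)$. Because atoms partition $G$, these errors sum to $\sin(\pi/\delta)\sum_{A\subseteq G}W_A^{\rm tot}$. The resulting inequality involves the original scores $S_{i,k}$ with weights $\pi_k/(n_k+1)$:
\[
\sum_{k}\frac{\pi_k}{n_k+1}\sum_{i:X_{i,k}\in G}\mathbf 1\{S_{i,k}>\tilde g(X_{i,k})\}+w_{n+1}\mathbf 1\{X_{n+1}\in G,\,S_{n+1}>\tilde g(X_{n+1})\}
\le(\alpha+\sin(\pi/\delta))(\cdots).
\]
We must check that the lemma's bound covers strict versus non-strict inequalities at centroid ties. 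If it does not, we will use left limits $F(t^-)$, which obey the same uniform bound.

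\textbf{Step 3 (exchangeability within clients).} Take expectations. The test point is drawn from client $k$ with probability $\pi_k$, so condition on that latent component. Given it, the $n_k+1$ points of client $k$ together with the test point are exchangeable. The fitted $\tilde g$ is not a symmetric function of these points, however, because the test point enters separately from the digest. This asymmetry is the main obstacle. We will handle it by following the proof of Theorem~\ref{thm:gcfcp_exact_group}: write the miscoverage event indicator for the test point, use that $\pi_k/(n_k+1)$ times the sum over client $k$'s augmented sample mirrors the test-point term in expectation, and thereby obtain
\[
\mathbb E\bigl[\mathbf 1\{X_{n+1}\in G\}\mathbf 1\{S_{n+1}>\tilde g(X_{n+1})\}\bigr]\le(\alpha+\sin(\pi/\delta))\,\mathbb P(X_{n+1}\in G).
\]
If the symmetrization fails because the digests depend on the test-free data, the fallback is to work with the augmented-data quantile regression as in Theorem~\ref{thm:gcfcp_exact_group}. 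We would then treat the sketch as a perturbation of the empirical measure that is bounded uniformly in the threshold, so that the deviation is again controlled by $\sin(\pi/\delta)$ times the group mass.

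\textbf{Step 4 (conclude).} Divide by $\mathbb P(X_{n+1}\in G)>0$. By \eqref{eq:gcfcp_set_coreset}, $Y_{n+1}\in\mathcal C$ if and only if $S_{n+1}\le\tilde g_{S_{n+1}}(X_{n+1})$. Taking complements gives \eqref{eq:lower_bound_gcfcp}.
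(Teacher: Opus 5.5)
Your route is the paper's: the first-order condition of \eqref{eq:coreset_beta_augqr} in the direction of each group, an atom-by-atom transfer from $\widehat p_A$ to $p_A$ via Lemma~\ref{lemma:tdigest_accuracy}, and an exchangeability step modelled on Theorem~\ref{thm:gcfcp_exact_group}. The paper runs these in the other order: it writes the coverage given $\mathcal E$ as $N/D$, then uses $N\ge\hat N-\epsilon\sum_{A\subseteq G}W_A$ and $\hat N\ge(1-\alpha)D$. Two of your worries resolve cleanly. Your one-sided derivative in direction $+e_G$ gets the tie-robust inequality more cleanly than the paper's $\mathcal T,\mathcal T_0$ bookkeeping. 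Your strict/non-strict concern in Step~2 is moot, because miscoverage is $\{S>\theta\}$, i.e.\ $1-\widehat p_A(\theta)$ with right-continuous CDFs, which is exactly what the lemma controls.

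\textbf{The gap is Step~3.} It carries all the probabilistic content, and you do not prove it. ``Following Theorem~\ref{thm:gcfcp_exact_group}'' means asserting the following: given $\mathcal E$ and $E_k$, the probability that the test point is miscovered equals the $\pi_k/(n_k+1)$-weighted fraction of client $k$'s augmented points miscovered \emph{by the same fit} $\tilde g$. This is precisely the identity $\mathbb P(\cdot\mid X_{n+1}\in G,\mathcal E)=N/D$ that the paper writes down by appeal to exchangeability. Exchangeability only yields it with the fit recomputed, for each $i$, after swapping point $i$ into the test role. That matches the paper's expression only if the fitted thresholds are unchanged by such swaps. As you note yourself, they are not:
\begin{itemize}
\item the digests see only calibration scores;
\item the test point carries weight $\sum_j\pi_j/(n_j+1)$ instead of $\pi_k/(n_k+1)$;
\item T-Digest clustering is nonlinear in the scores.
\end{itemize}
The first-order condition controls miscoverage of all points under one fit, not under $n_k+1$ different fits. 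So following the paper here inherits an unproved step rather than closing it. Your fallback does not help either: a uniform bound on $|p_A-\widehat p_A|$ perturbs the objective, not the symmetry of its minimizer.

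\textbf{What does work in the decoupled case.} When the fit decouples over atoms (e.g.\ $\mathcal G$ a partition, so each atom has its own free threshold), an FCP-style domination argument avoids symmetry altogether.
\begin{itemize}
\item Attach one hypothetical test draw to every client, so the actual test point is client $k$'s draw with probability $\pi_k$.
\item Let $\mu^{\mathrm{aug}}_A$ put mass $\pi_k/(n_k+1)$ on each of client $k$'s $n_k+1$ augmented scores with covariate in $A$.
\item Your Step~1, plus the left-limit version of the lemma (with $\epsilon$ the uniform digest error), shows that miscoverage with $X_{n+1}\in A$ forces $\mu^{\mathrm{aug}}_A((-\infty,S_{n+1}))\ge(1-\alpha-\epsilon)\,\mu^{\mathrm{aug}}_A(\mathbb R)$.
\item That event depends only on the test point and the client-wise multisets, and its conditional probability is at most $(\alpha+\epsilon)\,\mathbb P(X_{n+1}\in A\mid\mathcal E)$.
\end{itemize}
For overlapping groups, the jointly fitted $\tilde\beta$ couples the atoms, and there is no evident comparison of this kind. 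Step~3 therefore needs an ingredient that neither your proposal nor the paper's proof supplies.

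\textbf{A separate issue in Step~2.} You apply the lemma to the merged digest $\mathsf{TD}_A$ relative to the original scores. The lemma's proof needs clusters whose underlying samples occupy non-overlapping ranges. That holds for each client digest, but not for the re-clustered client centroids. Directly, you only get $\sup_t|p_A(t)-\widehat p_A(t)|\le 2\sin(\pi/\delta)$ (client error plus merge error). Corollary~\ref{cor:atomwise_accuracy} in the paper skips the same point.
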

\textit{Proof:} See Appendix \ref{sec:proof-thm-lower-bound}.

While Theorem~\ref{thm:lower_bound} assumes that problems \eqref{eq:coreset_beta_augqr} are solved exactly, the next result accounts for suboptimality induced by numerically solving problem~\eqref{eq:coreset_beta_augqr}.

An upper bound of the group-conditional coverage of GC-FCP can be found in Appendix~\ref{appendix:upper_bound}.




\section{Experiments}
\label{sec:experiments}

We evaluate GC-FCP on (\emph{i}) a synthetic regression benchmark \citep{romano2019cqr}; (\emph{ii}) CIFAR-10 image classification \citep{krizhevsky2009learning}; and (\emph{iii}) PathMNIST medical image classification \citep{medmnistv2}.
Across all experiments, we compare the proposed scheme, GC-FCP, to the following benchmarks:
(\emph{i}) \textbf{vanilla centralized CP}, which provides the marginal guarantee \eqref{eq:marginal_coverage};
(\emph{ii}) \textbf{FedCP} \citep{lu2023federated}, which also satisfies \eqref{eq:marginal_coverage};
(\emph{iii}) \textbf{centralized CondCP} \citep{gibbs2023conformal};
and (\emph{iv}) \textbf{centralized GC-FCP}, which is described in Section~\ref{sec:centralized_gcfcp}.
CondCP and centralized GC-FCP are only evaluated on the small-scale synthetic dataset and CIFAR-10 due to their prohibitive computational complexity (see Table~\ref{tab:complexity}).
\rev{Appendix~\ref{appendix:imagenet_experiment} further reports a large-scale ImageNet-1K experiment with federated and group-wise baselines, and additional ablation experiments over group settings and compression.}

For each group $G\in\mathcal{G}$, we estimate group-conditional coverage on a test set $\mathcal{T}$ as
$
\widehat{\mathrm{cov}}(G)
=
\Myfrac{1}{|\mathcal{T}_G|}\sum_{(x,y)\in \mathcal{T}_G}\mathbf{1}\left\{y\in \mathcal{C}(x \mid \mathcal{D})\right\},
$
where $\mathcal{T}_G=\{(x,y)\in\mathcal{T}: x\in G\}$,
and report the empirical coverage $\widehat{\mathrm{cov}}(G)$ for all groups $G\in\mathcal{G}$.
For classification, we also report the average prediction set size
$({1}/{|\mathcal{T}|})\sum_{(x,y)\in\mathcal{T}}|\mathcal{C}(x \mid \mathcal{D})|$.
Computational complexity is evaluated by the average wall-clock time required for each method to construct a prediction set on the same platform.

\subsection{Synthetic Regression}
\label{sec:exp_synth}

\begin{figure*}[t] 
\centering
    \includegraphics[width=1\linewidth]{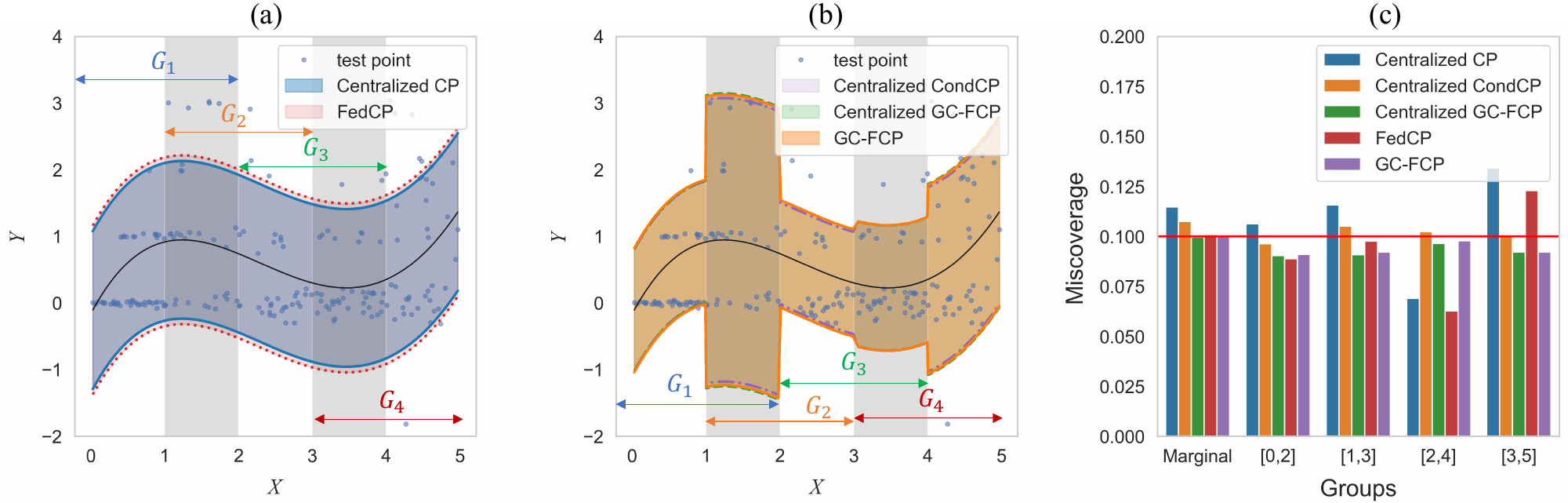}
    \caption{Visualization of prediction sets for the synthetic regression task for (a) centralized CP and FedCP~\citep{lu2023federated}, as well as for (b) centralized CondCP, centralized GC-FCP, and GC-FCP. (c) Per-group miscoverage rate.
}
\label{fig:synth_main}
\end{figure*}

For the synthetic regression task, we consider $K=4$ clients with heterogeneous covariate distributions $P_{X,k}$ given by truncated normal distributions on the interval $[0,5]$ with mean $\mu_k = 0.5 + \frac{4(k-1)}{K-1}$ and variance $\sigma_k = 0.5 + 0.1(k-1)$.

Following \citep{romano2020classification}, we generate responses as
\begin{multline} \label{eq:synth_y_gen}
Y_k \sim \mathrm{Pois}(\sin^2(X)+0.1) + 0.03X\epsilon_1 \\ + 25\mathbf{1}\{U<0.01\}\epsilon_2 + \mathcal{N}(0,0.01k^2),
\end{multline}
with independent variables $\epsilon_1,\epsilon_2\sim\mathcal{N}(0,1)$ and $U\sim\mathrm{Unif}([0,1])$. We train a centralized linear regression model $f(\cdot)$ on a separate training dataset generated in the same way, and use the score $s(x,y)=|y-f(x)|$. 
Groups are given by overlapping intervals $\mathcal{G}=\{[0,2],[1,3],[2,4],[3,5]\}$. 
The miscoverage level $\alpha$ is set to $\alpha=0.1$.
Other parameters are summarized in Table~\ref{tab:synth_setup} in Appendix~\ref{appendix:additional_results_of_main}.

Fig.~\ref{fig:synth_main} (a)-(b) visualizes representative prediction sets, while Fig.~\ref{fig:synth_main} reports the mis-coverage rate across the four overlapping interval groups.
Centralized CP and FedCP apply a single global threshold and therefore exhibit uneven group-wise miscoverage rates. 
In particular, some groups are under-covered (mis-coverage above $\alpha$), reflecting shifts in the conditional score distribution across covariate regions. In contrast, CondCP, centralized GC-FCP, and GC-FCP yield substantially more uniform group-wise mis-coverage near the target level $\alpha$ across all groups (Fig.~\ref{fig:synth_main}(c)) by adapting the threshold over the covariate space (Fig.~\ref{fig:synth_main}(b)).

\subsection{CIFAR-10 Experiments}
\label{sec:exp_cifar}

%


\begin{table*}[t]
\centering
\caption{Coverage, set size, and computational load comparisons on CIFAR-10~\citep{krizhevsky2009learning}. Each entry reports mean $\pm$ standard error over Monte Carlo repetitions.  Standard errors for set size below $0.01$ are omitted. A red star ${}^{{\color{red}\star}}$ indicates coverage below the target $1-\alpha=0.9$.}
{\color{black}
\begin{tabular}{ccccccc}
\hline
\multirow{2}{*}{\textbf{Methods}} & \multirow{2}{*}{\textbf{\begin{tabular}[c]{@{}c@{}}Marginal\\ coverage\end{tabular}}} & \multicolumn{4}{c}{\textbf{Group-conditional coverage (set size)}} & \multirow{2}{*}{\textbf{\begin{tabular}[c]{@{}c@{}}Comp.\\ speedup\end{tabular}}} \\ \cline{3-6}
 &  & \textbf{$G_1$} & \textbf{$G_2$} & \textbf{$G_3$} & \textbf{$G_4$} &  \\ \hline
\begin{tabular}[c]{@{}c@{}} Centralized \\ CP \end{tabular} & $0.901\pm0.001$ & \begin{tabular}[c]{@{}c@{}}$0.879^{{\color{red}\star}}\pm0.001$ \\ ($0.91$)\end{tabular} & \begin{tabular}[c]{@{}c@{}}$0.855^{{\color{red}\star}}\pm0.001$\\ ($0.90$)\end{tabular} & \begin{tabular}[c]{@{}c@{}}$0.906\pm0.001$\\ ($0.93$)\end{tabular} & \begin{tabular}[c]{@{}c@{}}$0.936\pm0.001$\\ ($0.95$)\end{tabular} & N/A \\ \hline
\begin{tabular}[c]{@{}c@{}} Centralized \\ CondCP \end{tabular} & $0.903\pm0.001$ & \begin{tabular}[c]{@{}c@{}}$0.901\pm0.001$\\ ($0.96$)\end{tabular} & \begin{tabular}[c]{@{}c@{}}$0.901\pm0.001$\\ ($0.98$)\end{tabular} & \begin{tabular}[c]{@{}c@{}}$0.902\pm0.001$\\ ($0.94$)\end{tabular} & \begin{tabular}[c]{@{}c@{}}$0.901\pm0.001$\\ ($0.91$)\end{tabular} & $1.00\times$ \\ \hline
\begin{tabular}[c]{@{}c@{}} Centralized \\ GC-FCP \end{tabular} & $0.906\pm0.001$ & \begin{tabular}[c]{@{}c@{}}$0.903\pm0.001$\\ ($0.96$)\end{tabular} & \begin{tabular}[c]{@{}c@{}}$0.903\pm0.001$\\ ($0.99$)\end{tabular} & \begin{tabular}[c]{@{}c@{}}$0.905\pm0.001$\\ ($0.94$)\end{tabular} & \begin{tabular}[c]{@{}c@{}}$0.905\pm0.001$\\ ($0.92$)\end{tabular} & $1.00\times$ \\ \hline
FedCP & $0.902\pm0.001$ & \begin{tabular}[c]{@{}c@{}}$0.880^{{\color{red}\star}}\pm0.001$ \\ ($0.92$)\end{tabular} & \begin{tabular}[c]{@{}c@{}}$0.856^{{\color{red}\star}}\pm0.001$ \\ ($0.90$)\end{tabular} & \begin{tabular}[c]{@{}c@{}}$0.906\pm0.001$\\ ($0.93$)\end{tabular} & \begin{tabular}[c]{@{}c@{}}$0.936\pm0.001$\\ ($0.95$)\end{tabular} & N/A \\ \hline
\begin{tabular}[c]{@{}c@{}} GC-FCP \\ ($\delta=25$) \end{tabular} & $0.911\pm0.001$ & \begin{tabular}[c]{@{}c@{}}$0.896^{{\color{red}\star}}\pm0.002$ \\ ($0.95$)\end{tabular} & \begin{tabular}[c]{@{}c@{}}$0.887^{{\color{red}\star}}\pm0.002$ \\ ($0.96$)\end{tabular} & \begin{tabular}[c]{@{}c@{}}$0.919\pm0.002$\\ ($0.95$)\end{tabular} & \begin{tabular}[c]{@{}c@{}}$0.931\pm0.002$\\ ($0.95$)\end{tabular} & $\mathbf{34.36\times}$ \\ \hline
\begin{tabular}[c]{@{}c@{}} GC-FCP \\ ($\delta=250$) \end{tabular} & $0.906\pm0.001$ & \begin{tabular}[c]{@{}c@{}}$0.903\pm0.001$\\ ($0.96$)\end{tabular} & \begin{tabular}[c]{@{}c@{}}$0.902\pm0.001$\\ ($0.98$)\end{tabular} & \begin{tabular}[c]{@{}c@{}}$0.905\pm0.001$\\ ($0.94$)\end{tabular} & \begin{tabular}[c]{@{}c@{}}$0.906\pm0.001$\\ ($0.92$)\end{tabular} & $\mathbf{19.73\times}$ \\ \hline
\begin{tabular}[c]{@{}c@{}} GC-FCP \\ ($\delta=2500$) \end{tabular} & $0.906\pm0.001$ & \begin{tabular}[c]{@{}c@{}}$0.903\pm0.001$\\ ($0.96$)\end{tabular} & \begin{tabular}[c]{@{}c@{}}$0.903\pm0.001$\\ ($0.99$)\end{tabular} & \begin{tabular}[c]{@{}c@{}}$0.905\pm0.001$\\ ($0.94$)\end{tabular} & \begin{tabular}[c]{@{}c@{}}$0.905\pm0.001$\\ ($0.92$)\end{tabular} & $\mathbf{3.96\times}$ \\ \hline
\end{tabular}
}
\label{tab:experimental_result_cifar10}
\end{table*}

\begin{table*}[ht]
\centering
\caption{Coverage and set size comparisons on PathMNIST. Each entry reports mean $\pm$ standard error over Monte Carlo repetitions. Standard errors of set size below $0.01$ are omitted. A red star ${}^{{\color{red}\star}}$ indicates the coverage below the target $1-\alpha=0.9$.}
{\color{black}
\begin{tabular}{cccccc}
\hline
\multirow{2}{*}{\textbf{\begin{tabular}[c]{@{}c@{}}Methods \\ (marginal coverage)\end{tabular}}} & \multicolumn{5}{c}{\textbf{Group-conditional coverage (set size)}} \\ \cline{2-6} 
 & \textbf{$G_1$} & \textbf{$G_2$} & \textbf{$G_3$} & \textbf{$G_4$} & \textbf{$G_5$} \\ \hline
\begin{tabular}[c]{@{}c@{}} Centralized CP \\ ($0.901\pm0.001$) \end{tabular} & \begin{tabular}[c]{@{}c@{}}$0.888^{{\color{red}\star}} \pm0.001$\\ ($1.00$)\end{tabular} & \begin{tabular}[c]{@{}c@{}}$0.885^{{\color{red}\star}} \pm0.001$\\ ($1.00$)\end{tabular} & \begin{tabular}[c]{@{}c@{}}$0.879^{{\color{red}\star}} \pm0.001$\\ ($1.00$)\end{tabular} & \begin{tabular}[c]{@{}c@{}}$0.935 \pm0.001$\\ ($1.00$)\end{tabular} & \begin{tabular}[c]{@{}c@{}}$0.895^{{\color{red}\star}} \pm0.001$\\ ($1.00$)\end{tabular} \\ \hline
\begin{tabular}[c]{@{}c@{}} FedCP \\ ($0.905\pm0.001$) \end{tabular} & \begin{tabular}[c]{@{}c@{}}$0.892^{{\color{red}\star}} \pm0.001$\\ ($1.01$)\end{tabular} & \begin{tabular}[c]{@{}c@{}}$0.889^{{\color{red}\star}} \pm0.001$\\ ($1.01$)\end{tabular} & \begin{tabular}[c]{@{}c@{}}$0.883^{{\color{red}\star}} \pm0.001$\\ ($1.02$)\end{tabular} & \begin{tabular}[c]{@{}c@{}}$0.937 \pm0.001$\\ ($1.01$)\end{tabular} & \begin{tabular}[c]{@{}c@{}}$0.900 \pm0.001$\\ ($1.02$)\end{tabular} \\ \hline
\begin{tabular}[c]{@{}c@{}} GC-FCP ($\delta=25$) \\ ($0.876^{{\color{red}\star}}\pm0.002$) \end{tabular} & \begin{tabular}[c]{@{}c@{}}$0.891^{{\color{red}\star}} \pm0.002$\\ ($1.12$)\end{tabular} & \begin{tabular}[c]{@{}c@{}}$0.890^{{\color{red}\star}} \pm0.003$\\ ($1.14\pm0.01$)\end{tabular} & \begin{tabular}[c]{@{}c@{}}$0.880^{{\color{red}\star}} \pm0.003$\\ ($1.15\pm0.09$)\end{tabular} & \begin{tabular}[c]{@{}c@{}}$0.895^{{\color{red}\star}} \pm0.004$\\ ($1.07\pm0.09$)\end{tabular} & \begin{tabular}[c]{@{}c@{}}$0.853^{{\color{red}\star}} \pm0.003$\\ ($1.01\pm0.05$)\end{tabular} \\ \hline
\begin{tabular}[c]{@{}c@{}} GC-FCP ($\delta=250$) \\ ($0.908\pm0.001$) \end{tabular} & \begin{tabular}[c]{@{}c@{}}$0.913 \pm0.001$\\ ($1.07$)\end{tabular} & \begin{tabular}[c]{@{}c@{}}$0.910 \pm0.001$\\ ($1.08$)\end{tabular} & \begin{tabular}[c]{@{}c@{}}$0.911 \pm0.001$\\ ($1.11$)\end{tabular} & \begin{tabular}[c]{@{}c@{}}$0.911 \pm0.001$\\ ($0.98$)\end{tabular} & \begin{tabular}[c]{@{}c@{}}$0.898 \pm0.001$\\ ($1.03$)\end{tabular} \\ \hline
\begin{tabular}[c]{@{}c@{}} GC-FCP ($\delta=2500$) \\ ($0.909\pm0.001$) \end{tabular} & \begin{tabular}[c]{@{}c@{}}$0.913 \pm0.001$\\ ($1.07$)\end{tabular} & \begin{tabular}[c]{@{}c@{}}$0.910 \pm0.001$\\ ($1.08$)\end{tabular} & \begin{tabular}[c]{@{}c@{}}$0.910 \pm0.001$\\ ($1.11$)\end{tabular} & \begin{tabular}[c]{@{}c@{}}$0.912 \pm0.001$\\ ($0.98$)\end{tabular} & \begin{tabular}[c]{@{}c@{}}$0.900 \pm0.001$\\ ($1.03$)\end{tabular} \\ \hline
\end{tabular}
}
\label{tab:experimental_result_pathmnist}
\end{table*}

We now adopt a pre-trained ResNet56 model $f(\cdot)$ for the CIFAR-10 image classification task with score function $s(x,y)=1-[f(x)]_y$, where $y$ denotes the $y$-th entry of the softmax result \citep{sadinle2019least}. We consider $K=5$ clients with a non-i.i.d.\ label partition, so that each client holds $10/K$ disjoint classes. We set uniform mixture weights $\pi_k=1/K$, and total calibration size $n=\sum_k n_k=5000$. Groups are defined by overlapping predicted label classes $\hat y(x)=\arg\max_y [f(x)]_y$:
\(
G_1=\{x:\hat y(x)\in\{0,1,2,3\}\},
G_2=\{x:\hat y(x)\in\{2,3,4,5\}\},
G_3=\{x:\hat y(x)\in\{4,5,6,7\}\}, \text{and }
G_4=\{x:\hat y(x)\in\{6,7,8,9\}\}.
\)
We set the miscoverage level $\alpha=0.1$ and use a pre-trained ResNet56 to serve the model $f(\cdot)$ to construct the score function $s(x,y)=1-[f(x)]_y$~\citep{sadinle2019least}. The report results represent the average after $50$ Monte Carlo simulations, each with $5000$ test points.

Table~\ref{tab:experimental_result_cifar10} reports marginal coverage, group-conditional coverage (with average set size in brackets), and computational speedup (``Comp. speedup''), with the latter being normalized with respect to the complexity of centralized CondCP.
Centralized CP and FedCP attain the same marginal coverage near $0.9$, but their group-conditional coverage varies substantially across groups (e.g., below $0.9$ for groups $G_1$ and $G_2$), indicating that marginal calibration does not control errors uniformly across overlapping groups. Centralized CondCP and centralized GC-FCP achieve near-uniform group-conditional coverage across all groups, matching the intended behavior of augmented quantile calibration under grouping conditions.

GC-FCP closely tracks the centralized group-conditional performance while substantially improving computational efficiency. 
The computational gains are pronounced: depending on $\delta$, GC-FCP yields from $3.96\times$ to $34.36\times$ per-test-point speedup relative to centralized CondCP, illustrating the accuracy--efficiency trade-off governed by the digest compression parameter.
The value of the compression parameter affects coverage. For instance, with aggressive compression ($\delta=25$), some groups are slightly under-covered (e.g., $G_2$), in a manner consistent with Theorem~\ref{thm:lower_bound}.

\subsection{Medical dataset experiments}
\label{sec:exp_med}


Finally, we evaluate GC-FCP on PathMNIST from MedMNIST (9 tissue classes) with images resized to $3\times 28\times 28$ \citep{medmnistv2}.
We consider $K=5$ clients, each holding samples from $2$ disjoint classes, except the last client holds $1$ class.
We train a centralized CNN $f(\cdot)$ on the training split.
We randomly shuffle the mixture validation and test data, split it into calibration and conformal test datasets equally,
and allocate calibration samples to clients as in the CIFAR-10 experiments.
Groups are defined by predicted-label classes:
$
G_1 = \{x : \hat{y}(x) \in \{0,1,2\}\}, 
G_2 = \{x : \hat{y}(x) \in \{1,2,3\}\}, 
G_3 = \{x : \hat{y}(x) \in \{2,3,4\}\}, 
G_4 = \{x : \hat{y}(x) \in \{3,4,5\}\},
G_5 = \{x : \hat{y}(x) \in \{4,5,6,7,8\}\}.
$
Other parameters remain the same as CIFAR-10 experiments.

Table~\ref{tab:experimental_result_pathmnist}
shows that Centralized CP and FedCP achieve marginal coverage near $0.9$, but can deviate noticeably at the group level (e.g., lower coverage on $G_3$). 
GC-FCP with moderate compression ($\delta=250$ or $\delta=2500$) attains group-conditional coverage close to the target for all groups, while maintaining small average set sizes (near one label on average). 
In contrast, overly aggressive compression ($\delta=25$) degrades calibration. 
Overall, these results corroborate that GC-FCP achieves group-conditional reliability in heterogeneous federated classification tasks, with accuracy controlled by the digest compression parameter $\delta$.



\section{Conclusion}
\label{sec:conclusion}

We have introduced \emph{Group-Conditional Federated Conformal Prediction (GC-FCP)}, a principled framework for constructing prediction sets with group-conditional guarantee in heterogeneous federated settings. 
GC-FCP leverages T-Digest to compress atom-stratified calibration scores into a mergeable coreset, achieving communication and computation efficiency when the number of active atoms is moderate.
Under mild assumptions, we established group-conditional coverage guarantees for GC-FCP, and empirical results on synthetic regression and image classification benchmarks corroborated these findings while demonstrating substantial computational speedups. 

GC-FCP assumes a prespecified finite group family and a trustworthy server for atom-based summaries.
Future work includes extending the framework to richer conditional structures beyond finite group families, developing privacy-preserving sketches, as well as investigating robustness against more complex settings such as decentralized calibration~\citep{wen2025distributed}, adversarial behavior~\citep{kang2024certifiably}, and online CP~\citep{angelopoulos2024online,gasparin2024conformal}.





\bibliography{uai2026-template}

\newpage

\onecolumn

\title{Efficient Federated Conformal Prediction with Group-Conditional Guarantee \\ (Supplementary Material)}
\maketitle


\appendix

\section{Proof of Section~\ref{sec:method}}
\subsection{Proof of Theorem~\ref{thm:gcfcp_exact_group}}
\label{sec:proof-thm-exact-group}
\begin{proof}[Proof]
First assume there are no ties, i.e.,
$S_{i,k} \neq \hat g_{S_{n+1}}(X_{i,k})$ for all $i\in[n_k],\,k\in[K]$, and
$S_{n+1} \neq \hat g_{S_{n+1}}(X_{n+1})$.
The tie case is handled at the end of the proof.
Recall that $g(x)=\beta^\top \Phi(x)$ with $\beta\in\mathbb{R}^{|\mathcal{G}|}$ and $\Phi(\cdot)$ defined in~\eqref{eq:membership_vector}, 
then the optimal function $\hat{g}_{S_{n+1}}(\cdot)$ solved by $\eqref{eq:gcfcp_augqr}$ reduces to a real-value vector given by
\begin{equation}
    \beta^* = \underset{\beta \in\mathbb{R}^{|\mathcal{G}|}}{\operatorname*{\operatorname*{argmin}}}
\sum_{k=1}^{K} \bigg (\frac{\pi_k }{n_k+1} 
\sum_{i=1}^{n_k+1}\ell_\alpha\bigg(\sum_{G\in\mathcal{G}}\beta_G\mathbf{1}\{X_{i,k}\in G\},S_{i,k}\bigg) \bigg),
\end{equation}
where we denote $S_{n_k+1,k}=S_{n+1}$ for simplicity. 
Under the no-ties assumption, the first-order optimality condition implies that for every group $G\in\mathcal{G}$,
\begin{equation}
\sum_{k=1}^K\frac{\pi_k}{n_k+1}\sum_{i=1}^{n_k+1}\Big(\mathbf{1}\{X_{i,k}\in G\} \left(\mathbf{1}\{S_{i,k}<\theta_{i,k}^*\}-(1-\alpha)\right) \Big)=0, \quad \forall G\in \mathcal{G}
\end{equation}
where $\theta_{i,k}^*=\sum_{G\in\mathcal{G}}\beta^*_G\mathbf{1}\{X_{i,k}\in G\}$. 

Let $E_k$ be the event that $(X_{n+1},Y_{n+1})$ is drawn from $\mathrm{P}_k$.
Conditioned on $E_k$, the $(n_k+1)$ scores $\{S_{i,k}\}_{i=1}^{n_k+1}$ are exchangeable.
Let $\mathcal{E}$ be the event that, for every $k\in[K]$, there exists a permutation $\sigma_k$ such that
\begin{equation}
\big(S_{\sigma_k(1),k},\ldots,S_{\sigma_k(n_k+1),k}\big)
=
\big(s_{1,k},\ldots,s_{n_k+1,k}\big),
\tag{40}
\end{equation}
where $(s_{1,k},\ldots,s_{n_k+1,k})$ denotes the realized values of $(S_{1,k},\ldots,S_{n_k+1,k})$.
Then, we have, for all $G \in \mathcal{G}$,
\begin{equation}
\begin{aligned}
& \mathbb{P}(Y_{n+1} \in \mathcal{C}(X_{n+1}\mid\mathcal{D}) \mid X_{n+1}\in G,\mathcal{E}) \\ 
& = \frac{\mathbb{P}(S_{n+1}\le \theta^*_{n+1}, X_{n+1}\in G \mid \mathcal{E})}{\mathbb{P}(X_{n+1}\in G \mid \mathcal{E})} \\ 
& = \frac{\sum_{k=1}^{K}\pi_k \mathbb{P}(S_{n+1}\le \theta^*_{n+1}, X_{n+1}\in G \mid \mathcal{E}, \mathcal{E}_k)}{\sum_{k=1}^{K}\pi_k \mathbb{P}(X_{n+1}\in G \mid \mathcal{E}, \mathcal{E}_k)} \\ 
& \overset{(a)}{=} \frac{\sum_{k=1}^{K}\frac{\pi_k}{n_k+1} \sum_{i=1}^{n_k+1}\mathbf{1}\{S_{i,k}\le \theta^*_{i,k}\} \mathbf{1}\{X_{i,k}\in G\} }{\sum_{k=1}^{K} \frac{\pi_k}{n_k+1} \sum_{i=1}^{n_k+1}\mathbf{1}\{X_{i,k}\in G\} }  \\
& \overset{(b)}{=} \frac{(1-\alpha)\sum_{k=1}^{K} \frac{\pi_k}{n_k+1} \sum_{i=1}^{n_k+1}\mathbf{1}\{X_{i,k}\in G\} }{\sum_{k=1}^{K} \frac{\pi_k}{n_k+1} \sum_{i=1}^{n_k+1}\mathbf{1}\{X_{i,k}\in G\} } \\ 
& = 1-\alpha,
\end{aligned}
\end{equation}
where $(a)$ follows exchangeability on client $k$ under the event $E_k$ and $(b)$ follows the first-order condition. 

This result implies marginal coverage under $G = \mathcal{X}$. If the assumption $S_i\ne \hat{g}_S(X_i)$ does not hold, we have 
$$
\mathbb{P}(Y_{n+1} \in \mathcal{C}(X_{n+1}\mid\mathcal{D}) \mid X_{n+1}\in G,\mathcal{E}) \ge 1-\alpha.
$$
The proof technique is similar to \citep{gibbs2023conformal} and Appendix~\ref{sec:proof-thm-lower-bound} and is omitted here. By the tower rule, we obtain the desired result.
\end{proof}

\section{Proofs for Section~\ref{sec:theoretical_guarantee}}
\label{sec:appendix-proofs}

\subsection{Results on T-Digest}\label{app:tdigest_accuracy}

This section provides additional results on T-Digest and proves Lemma~\ref{lemma:tdigest_accuracy}. 

To begin, we state the following fact that holds in our digest construction described in Section~\ref{sec:gc-fcp}.

\subsubsection{Important Lemmas}
\begin{lemma}[CDF error controlled by maximal cluster mass]\label{lem:tdigest_cdf_mass}
Let $\{(R_i,w_i)\}_{i=1}^\ell$ be weighted samples with total weight $W$ and empirical CDF $F$.
Let $\mathsf{TD}=\{(\bar R_c,W_c)\}_{c=1}^{m}$ be a digest with induced CDF $\widehat F(t)\ =\ \frac{1}{W}\sum_{c=1}^{m} W_c\,\mathbf 1\{\bar R_c\le t\}$.
Define the maximal normalized cluster mass
\begin{equation}
\rho_{\max}\ =\ \max_{1\le c\le m }\frac{W_c}{W}.
\end{equation}
Then,
\begin{equation}
\sup_{t\in\mathbb R}\bigl|F(t)-\widehat F(t)\bigr|\le\rho_{\max}.
\end{equation}
\end{lemma}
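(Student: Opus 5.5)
The plan is to compare $F$ and $\widehat F$ cluster by cluster, using two structural facts about the digest. First, clusters are contiguous blocks of the sorted samples: if $c_1<c_2$ then $R_i\le R_j$ for all $i\in\mathcal R_{c_1}$ and $j\in\mathcal R_{c_2}$. Second, each representative $\bar R_c$ is a weighted mean with positive weights, so $\min_{i\in\mathcal R_c}R_i\le \bar R_c\le \max_{i\in\mathcal R_c}R_i$. Since the clusters partition the samples, we can write, for any $t\in\mathbb R$,
\begin{equation*}
F(t)-\widehat F(t)=\frac{1}{W}\sum_{c=1}^{m}\Bigl(\sum_{i\in\mathcal R_c}w_i\mathbf 1\{R_i\le t\}-W_c\mathbf 1\{\bar R_c\le t\}\Bigr),
\end{equation*}
and then bound each bracketed term separately.

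Fix $t$ and classify each cluster as one of three types:
\begin{itemize}
\item \emph{below} $t$, if all of its samples are $\le t$;
\item \emph{above} $t$, if all of its samples are $>t$;
\item \emph{straddling} $t$, otherwise.
\end{itemize}
For a cluster below $t$, the mean property gives $\bar R_c\le t$, so the bracket equals $W_c-W_c=0$. For a cluster above $t$, it gives $\bar R_c>t$, so the bracket equals $0-0=0$. For a straddling cluster, the first sum lies in $[0,W_c]$ and the second term is either $0$ or $W_c$, so the bracket has absolute value at most $W_c$.

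The key step is to show that at most one cluster straddles $t$; this is where the ordering is used. Suppose cluster $c$ straddles $t$, so it contains some $R_j\le t$ and some $R_{j'}>t$.
\begin{itemize}
\item Any cluster $c'<c$ has all its samples $\le R_j\le t$, hence lies below $t$.
\item Any cluster $c''>c$ has all its samples $\ge R_{j'}>t$, hence lies above $t$.
\end{itemize}
So no other cluster can straddle $t$. This argument only needs the weak ordering $R_i\le R_j$, so it still works when tied values are split across neighbouring clusters. Handling ties correctly is the main point to check.

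Combining the three cases, $|F(t)-\widehat F(t)|\le W_{c}/W\le\rho_{\max}$ when some cluster $c$ straddles $t$, and $|F(t)-\widehat F(t)|=0$ otherwise. Taking the supremum over $t$ gives the claim.
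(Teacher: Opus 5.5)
Your proof is correct and takes essentially the same route as the paper's. Both arguments classify each cluster by whether $t<L_c$, $t\ge U_c$, or $t\in[L_c,U_c)$, observe that the first two cases contribute equally to $F$ and $\widehat F$, and use the ordering of clusters to show that at most one cluster straddles $t$, which gives error at most $W_c/W$.

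You also make explicit two steps the paper leaves implicit: that $\bar R_c\in[L_c,U_c]$ because it is a positively weighted mean, and that every cluster with index below the straddling one lies entirely at or below $t$.
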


\begin{proof}
Let $\{\mathcal{R}_c\}_{c=1}^{m}$ be the partition for the digest defined in Section~\ref{sec:gc-fcp}, and define
$L_c=\min_{i\in \mathcal{R}_c}R_i$ and $U_c=\max_{i\in \mathcal{R}_c}R_i$.
Fix $t\in\mathbb R$ and consider any cluster $c$. If $t<L_c$, then cluster $c$ contributes zero to both $F(t)$ and $\widehat F(t)$.
If $t\ge U_c$, cluster $c$ contributes $W_c/W$ to both $F(t)$ and $\widehat F(t)$.
If $t\in[L_c,U_c)$, discrepancy arises from the cluster $c$ only because $\max_{i\in \mathcal{R}_c} R_i \le\min_{i\in \mathcal{R}_{c^\prime}} R_i$ for any $c'>c$, which yields
\begin{equation}
\bigl|F(t)-\widehat F(t)\bigr|
\le
\frac{W_c}{W}.
\end{equation}
Taking the supremum over $t$ completes the proof.
\end{proof}

\begin{lemma}[Arcsine scale implies $\rho_{\max}\le \sin(\pi/\delta)$]\label{lem:tdigest_mass_arcsine}
Suppose the digest is constructed with the scale $r(q)=\frac{\delta}{2\pi}\arcsin(2q-1)$ under the condition of \eqref{eq:td_size_criterion} with $\delta\ge2$. Then
\begin{equation}
\rho_{\max}=\max_{1\le c\le m}\frac{W_c}{W}\le\sin\Bigl(\frac{\pi}{\delta}\Bigr) \le \frac{\pi}{\delta}.
\end{equation}
\end{lemma}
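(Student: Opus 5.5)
The plan is a direct trigonometric computation. The key observation is that the normalized mass of cluster $c$ is exactly the width of its quantile interval:
\[
\frac{W_c}{W}=\frac{V_c-V_{c-1}}{W}=q_c^{\mathrm{R}}-q_c^{\mathrm{L}}.
\]
It therefore suffices to show that the size criterion \eqref{eq:td_size_criterion} with the arcsine scale \eqref{eq:td_scale} forces $q_c^{\mathrm{R}}-q_c^{\mathrm{L}}\le\sin(\pi/\delta)$ for every $c$. I will use that the criterion holds for every retained cluster $c=1,\ldots,m$, as stated in \eqref{eq:td_size_criterion}.

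First, change variables. Set $a_c=\arcsin(2q_c^{\mathrm{L}}-1)$ and $b_c=\arcsin(2q_c^{\mathrm{R}}-1)$. Since $0\le q_c^{\mathrm{L}}\le q_c^{\mathrm{R}}\le1$, both angles lie in $[-\pi/2,\pi/2]$ and satisfy $a_c\le b_c$. Substituting \eqref{eq:td_scale} into \eqref{eq:td_size_criterion} gives $\frac{\delta}{2\pi}(b_c-a_c)\le1$, that is,
\[
0\le\frac{b_c-a_c}{2}\le\frac{\pi}{\delta}.
\]
Inverting, $q_c^{\mathrm{L}}=(1+\sin a_c)/2$ and $q_c^{\mathrm{R}}=(1+\sin b_c)/2$.

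Next, apply the sum-to-product identity:
\[
\frac{W_c}{W}=\frac{\sin b_c-\sin a_c}{2}=\cos\Bigl(\frac{a_c+b_c}{2}\Bigr)\sin\Bigl(\frac{b_c-a_c}{2}\Bigr).
\]
Since $(a_c+b_c)/2\in[-\pi/2,\pi/2]$, the cosine factor lies in $[0,1]$, so $W_c/W\le\sin((b_c-a_c)/2)$. Because $\delta\ge2$, both $(b_c-a_c)/2$ and $\pi/\delta$ lie in $[0,\pi/2]$, where $\sin$ is nondecreasing. This yields $W_c/W\le\sin(\pi/\delta)$.

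Taking the maximum over $c$ gives $\rho_{\max}\le\sin(\pi/\delta)$. The final inequality $\sin(\pi/\delta)\le\pi/\delta$ follows from $\sin x\le x$ for $x\ge0$.

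The only delicate point is not the calculus but justifying that every cluster actually satisfies \eqref{eq:td_size_criterion}. The greedy procedure could otherwise produce a singleton cluster whose single weight already exceeds the budget. I will rely on the construction as stated, where the criterion is imposed for all $c$. I will also remark that, within a single digest $\mathsf{TD}_{k,A}$, all weights are equal. A singleton therefore has normalized mass $1/|\mathcal{D}_{k,A}|$, so the bound requires implicitly that atoms are not too sparse relative to $\delta$.

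Combined with Lemma~\ref{lem:tdigest_cdf_mass}, this result immediately gives the uniform CDF bound of Lemma~\ref{lemma:tdigest_accuracy}.
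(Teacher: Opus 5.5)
Your proof is correct and follows the same route as the paper: identify $W_c/W$ with $q_c^{\mathrm{R}}-q_c^{\mathrm{L}}$, pass to the arcsine angles with $b_c-a_c\le 2\pi/\delta$, and bound the resulting difference of sines. The paper only asserts that this width is maximized by the interval symmetric about zero, whereas your identity $\frac{1}{2}(\sin b_c-\sin a_c)=\cos\bigl(\frac{a_c+b_c}{2}\bigr)\sin\bigl(\frac{b_c-a_c}{2}\bigr)$ actually proves that step, and your caveat is a fair reading of the hypothesis: the criterion must hold for every cluster, which can fail for heavy singletons.
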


\begin{proof}
For each cluster $c$, let its empirical left/right quantile boundaries be
$q_c^L=V_{c-1}/W$ and $q_c^R=V_c/W$, where $V_c=\sum_{i=1}^c W_i$ (with $V_0=0$).
Then $q_c^R-q_c^L=W_c/W$. The well-formedness constraint gives
\begin{equation}
r(q_c^R)-r(q_c^L)\le1
\quad\Longleftrightarrow\quad
\arcsin(2q_c^R-1)-\arcsin(2q_c^L-1)\le\frac{2\pi}{\delta}.
\end{equation}
Let $u_L=2q_c^L-1$ and $u_R=2q_c^R-1$, so that $u_L,u_R\in[-1,1]$ and
$q_c^R-q_c^L=(u_R-u_L)/2$. Under the constraint
$\arcsin(u_R)-\arcsin(u_L)\le 2\pi/\delta$, the difference $u_R-u_L$ is maximized by taking symmetry around zero:
$\arcsin(u_R)=\pi/\delta$ and $\arcsin(u_L)=-\pi/\delta$, hence $u_R=\sin(\pi/\delta)$ and $u_L=-\sin(\pi/\delta)$.
Therefore
\begin{equation}
u_R-u_L\le2\sin\Bigl(\frac{\pi}{\delta}\Bigr)
\quad\Longrightarrow\quad
q_c^R-q_c^L=\frac{u_R-u_L}{2}\le\sin\Bigl(\frac{\pi}{\delta}\Bigr),
\end{equation}
which is exactly $W_c/W\le \sin(\pi/\delta)$. Taking the maximum over $c$ yields the claim.
\end{proof}

\begin{lemma}[From $\|F-\widehat F\|_\infty$ to rank-accurate quantiles]\label{lem:cdf_to_quantile}
Let $F,\widehat F$ be CDFs on $\mathbb R$ and define the generalized quantile function $\widehat Q(u)=\inf\{t:\widehat F(t)\ge u\}$.
If $\sup_{t\in\mathbb R}|F(t)-\widehat F(t)|\le \epsilon$, then for all $u\in[0,1]$,
\begin{equation}
F(\widehat Q(u))\ \in\ [u-\epsilon,\ u+\epsilon].
\end{equation}
In particular, when $F$ is an empirical CDF with total weight $W$, $\widehat Q(u)$ has empirical rank within
$(u\pm \epsilon)W$.
\end{lemma}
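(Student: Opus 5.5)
The plan is to work only with the definition of the generalized inverse and right-continuity of CDFs. Set $q=\widehat Q(u)=\inf\{t:\widehat F(t)\ge u\}$. The edge case $u=0$, where $q=-\infty$ under the usual convention, is handled separately by interpreting $F(-\infty)=0$. For $u\in(0,1]$, $q$ is finite because $\widehat F$ is a step CDF with finitely many atoms $\bar R_c$. Right-continuity of $\widehat F$ gives $\widehat F(q)\ge u$, and by definition of the infimum $\widehat F(t)<u$ for every $t<q$.

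\textbf{Lower bound.} Combining $\widehat F(q)\ge u$ with $\sup_t|F(t)-\widehat F(t)|\le\epsilon$ at $t=q$ gives $F(q)\ge \widehat F(q)-\epsilon\ge u-\epsilon$ immediately.

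\textbf{Upper bound.} For every $t<q$ we have $F(t)\le \widehat F(t)+\epsilon<u+\epsilon$. Letting $t\uparrow q$ yields $F(q^-)\le u+\epsilon$. To pass from $F(q^-)$ to $F(q)$ we must control the jump of $F$ at $q$, and this is the step I expect to be the real obstacle.

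A general pair of CDFs at sup-distance $\epsilon$ does not force $F(q)\le u+\epsilon$, since $F(q)$ can only be bounded by $\widehat F(q)+\epsilon$, and $\widehat F(q)$ may overshoot $u$ by a whole cluster mass. My first attempt is therefore to prove the conclusion in the rank form stated at the end of the lemma. The empirical rank interval of $q$, namely $[F(q^-),F(q)]\cdot W$, intersects $[u-\epsilon,u+\epsilon]\cdot W$, because $F(q^-)\le u+\epsilon$ and $F(q)\ge u-\epsilon$. Equivalently, some admissible rank of $\widehat Q(u)$ lies in $(u\pm\epsilon)W$. This is exactly the form needed downstream, where only the rank of the threshold among the scores matters.

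If the pointwise statement $F(q)\le u+\epsilon$ is required, there are two options. One is to assume $F$ is continuous at $q$, that is, no ties among the original scores and the centroids, so that $F(q)=F(q^-)$. The other is to use the T-Digest structure. In Lemma~\ref{lem:tdigest_cdf_mass}, the discrepancy at a centroid location arises only inside the cluster containing it. One then checks that $F(\bar R_c)\le \widehat F(\bar R_c)$ up to the within-cluster mass to the right of the mean, and that this mass is absorbed in the same $\rho_{\max}$ that defines $\epsilon$.

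Finally, I would specialize to the empirical CDF. $F(q)W$ is the total weight of samples with $R_i\le q$, that is, the rank. Multiplying the two bounds by $W$ gives the ``in particular'' claim. Combined with Lemma~\ref{lem:tdigest_mass_arcsine} and Lemma~\ref{lem:tdigest_cdf_mass} (with $\epsilon=\sin(\pi/\delta)$), this yields the quantile part of Lemma~\ref{lemma:tdigest_accuracy}.
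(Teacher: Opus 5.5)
Your diagnosis of the upper bound is right, and it is more careful than the paper's proof. The paper argues exactly as you do for the lower bound and for $F(t^\star-\eta)\le\widehat F(t^\star-\eta)+\epsilon<u+\epsilon$. It then concludes $F(t^\star)\le u+\epsilon$ by ``letting $\eta\to 0$ and using right-continuity of $F$''. That inference is invalid. The limit $t^\star-\eta\uparrow t^\star$ is taken from the left, so it yields only $F(t^\star-)\le u+\epsilon$, and right-continuity gives no control on the jump $F(t^\star)-F(t^\star-)$.

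The pointwise claim is in fact false. Take $F=\widehat F=\mathbf{1}\{t\ge 0\}$, any $\epsilon\in[0,1/2)$, and $u=1/2$. Then $\widehat Q(1/2)=0$ and $F(0)=1>u+\epsilon$. The same failure occurs for genuine T-Digest output. If every cluster is a singleton, then $\widehat F=F$ and the sup-distance is $0$, yet $F(\widehat Q(u))-u$ can be close to one sample's normalized weight. So you are correct not to derive $F(q)\le u+\epsilon$ from the sup-norm hypothesis alone. Your rank-interval statement, that $[F(q^-),F(q)]\,W$ meets $[(u-\epsilon)W,(u+\epsilon)W]$, is the right repaired form of the lemma.

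\textbf{Continuity fallback.} This option is valid but of limited use here. T-Digest keeps singleton clusters in the tails, and their means coincide with samples, exactly where $F$ jumps.

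\textbf{T-Digest fallback.} The accounting should go as follows.
Note that $\widehat Q(u)=\bar R_{c^*}$ with $c^*=\min\{c:V_c/W\ge u\}$. If no sample in a later cluster equals $\bar R_{c^*}$, then $F(\bar R_{c^*})\le V_{c^*}/W$ holds outright. What must be absorbed is the overshoot $V_{c^*}/W-u<W_{c^*}/W\le\rho_{\max}$, not the within-cluster mass to the right of the mean; that mass only helps. This gives $F(\widehat Q(u))<u+\rho_{\max}\le u+\sin(\pi/\delta)$. In other words, the conclusion holds with $\epsilon$ equal to $\rho_{\max}$, not to the sup-distance.

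The no-ties proviso cannot be dropped. If all scores equal $r$ and $\rho_{\max}<1/2$, every cluster has mean $r$, and $F(\widehat Q(1/2))=F(r)=1$.

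Hence your closing sentence overclaims slightly. Lemmas~\ref{lem:tdigest_cdf_mass} and \ref{lem:tdigest_mass_arcsine} yield the quantile part of Lemma~\ref{lemma:tdigest_accuracy} only in rank form, or pointwise under a no-ties assumption via the cluster argument. The flaw does not propagate to the coverage theorems, whose proofs use only the sup-norm CDF bound of Corollary~\ref{cor:atomwise_accuracy}.
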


\begin{proof}
Fix $u\in[0,1]$ and let $t^\star=\widehat Q(u)$. By definition, $\widehat F(t^\star)\ge u$ and for any $t<t^\star$,
$\widehat F(t)<u$. Using $\sup_t|F(t)-\widehat F(t)|\le\epsilon$ gives
$F(t^\star)\ge \widehat F(t^\star)-\epsilon \ge u-\epsilon$.
For the upper bound, for any $\eta>0$ we have $\widehat F(t^\star-\eta)<u$, hence
$F(t^\star-\eta)\le \widehat F(t^\star-\eta)+\epsilon < u+\epsilon$.
Letting $\eta \to 0$ and using right-continuity of $F$ yields $F(t^\star)\le u+\epsilon$.
\end{proof}

\subsubsection{Proof of Lemma~\ref{lemma:tdigest_accuracy}}

\begin{proof}
By Lemma~\ref{lem:tdigest_cdf_mass},
\begin{equation}
\sup_t|F(t)-\widehat F(t)|\le\rho_{\max}.
\end{equation}
By Lemma~\ref{lem:tdigest_mass_arcsine}, $\rho_{\max}\le \sin(\pi/\delta)$, hence
$\sup_t|F(t)-\widehat F(t)|\le \sin(\pi/\delta)$.
\end{proof}

\subsection{Proof of Theorem~\ref{thm:lower_bound}}
\label{sec:proof-thm-lower-bound}

We begin with the following corollary that controls the error of T-Digest for each atom.
\begin{corollary}[Atom-wise sketch accuracy]\label{cor:atomwise_accuracy}
Let $p_A$ be its weighted empirical CDF and let $\widehat p_A$ be the step-CDF induced by the merged digest $\mathsf{TD}_A$. Then, under the arcsine scale \eqref{eq:td_scale},
\begin{equation}
\sup_{t\in\mathbb R}\bigl|p_A(t)-\widehat p_A(t)\bigr|
\le
\epsilon
=\sin\!\left(\frac{\pi}{\delta}\right)\le\frac{\pi}{\delta}.
\end{equation}
Equivalently, the quantile query induced by $\mathsf{TD}_A$ is $\epsilon$-accurate in the sense of Section~4.3.
\end{corollary}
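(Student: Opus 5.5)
The corollary is Lemma~\ref{lemma:tdigest_accuracy} applied to the merged digest of a single atom. For a fixed active atom $A\in\mathcal A^+$, I will identify the weighted sample set underlying $\mathsf{TD}_A$ with the pooled scores $\bigcup_{k}\mathcal D_{k,A}$. Each score from client $k$ carries weight $\pi_k/(n_k+1)$, so the total weight is $W_A=\sum_k |\mathcal D_{k,A}|\,\pi_k/(n_k+1)$, and $p_A$ is the corresponding weighted empirical CDF. Since the atoms are disjoint, no score is counted twice. The step-CDF $\widehat p_A$ is exactly the induced CDF \eqref{eq:td_approximate_cdf} built from the clusters $\{(\bar S_{A,c},W_{A,c})\}_{c=1}^{m_A}$.

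Next I will invoke Lemma~\ref{lem:tdigest_cdf_mass}. Its proof only uses the order-preserving property of the clusters, namely that every score in cluster $c$ is at most every score in cluster $c'$ whenever $c<c'$. The merge step re-runs the greedy ordered procedure on the pooled (sub)clusters, so the final clusters $\mathcal S_{A,c}$ are unions of original scores that remain contiguous in sorted order. A cluster's mean therefore lies in $[L_c,U_c]$, and the contribution argument goes through unchanged. This gives $\sup_t|p_A(t)-\widehat p_A(t)|\le \max_c W_{A,c}/W_A$.

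Then I will apply Lemma~\ref{lem:tdigest_mass_arcsine} to the merged digest. The re-compression enforces the size criterion \eqref{eq:td_size_criterion} with the arcsine scale \eqref{eq:td_scale}, where the quantile boundaries are normalized by $W_A$. This yields $\max_c W_{A,c}/W_A\le\sin(\pi/\delta)$, and $\sin x\le x$ gives the bound $\pi/\delta$. The equivalent quantile statement follows directly from Lemma~\ref{lem:cdf_to_quantile} with $\epsilon=\sin(\pi/\delta)$.

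The step I expect to be the main obstacle is justifying contiguity after merging. The local digests have already collapsed scores into centroids, and the merge sorts by centroid rather than by raw score. Two local clusters from different clients can have interleaved score ranges, so a merged cluster need not be a contiguous block of the raw sorted scores, and this breaks the hypothesis of Lemma~\ref{lem:tdigest_cdf_mass}. My first attempt is to read the construction as the paper states it, with the merge applying "the same procedure" to the pooled samples, and to treat the merged digest as if built directly on $\bigcup_k\mathcal D_{k,A}$ at level $\delta$. If a rigorous version is needed, I will instead bound the error additively. Lemma~\ref{lem:tdigest_cdf_mass} applied to each local digest contributes at most $\sin(\pi/\delta)$ of its own mass. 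The merge is a second compression of the centroid-level CDF, again at most $\sin(\pi/\delta)$. Summing the mass-weighted local errors with the merge error gives at most $2\sin(\pi/\delta)$, and I will note this weaker bound if the contiguity interpretation cannot be defended.
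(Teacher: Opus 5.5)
Your primary route is the paper's own proof: the appendix says only that the corollary is a direct application of Lemma~\ref{lemma:tdigest_accuracy} to $\mathcal D_A$. The obstacle you flag is exactly where that argument breaks. The contiguity reading is not available, because the paper's Merge runs the greedy rule on the pooled local \emph{centroids} $\bigcup_k\mathsf{TD}_{k,A}$; the server never sees raw scores. With that Merge, the constant $\sin(\pi/\delta)$ is false.

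Take $\delta=6$, so $r(q)=\frac{3}{\pi}\arcsin(2q-1)$ equals $-\tfrac32,-\tfrac12,\tfrac12,\tfrac32$ at $q=0,\tfrac14,\tfrac34,1$. Give every score in $A$ the same weight $w$ (e.g.\ $\pi_1=\pi_2$, $n_1=n_2$), and fix $t$ and $\ell<t-0.01$.
\begin{itemize}
\item Client~1 holds $20$ scores at $\ell$, $50$ at $t-0.01$ and $10$ at $t+1$. Its digest is $\{(\ell,20w),(t-0.01,40w),(t+0.495,20w)\}$.
\item Client~2 holds $10$ scores at $\ell$, $19$ at $t-0.01$ and $11$ at $t+100$. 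Its digest is $\{(\ell,10w),(t+4.9905,20w),(t+100,10w)\}$.
\end{itemize}
In sorted order, the pooled centroids have normalized weights $\tfrac14,\tfrac13,\tfrac16,\tfrac16,\tfrac1{12}$. The merge therefore cuts at $\tfrac14$ and $\tfrac34$. Its middle cluster $\{t-0.01,\,t+0.495\}$ has mean $t+0.158>t$.

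Hence $\widehat p_A(t)=\tfrac{30}{120}$ while $p_A(t)=\tfrac{99}{120}$. The error is $0.575>\sin(\pi/6)=0.5$. As you anticipated, this comes from two local straddling errors ($\tfrac{10}{120}$ and $\tfrac{19}{120}$) adding to a merge straddling error ($\tfrac{40}{120}$). These sit in different merged clusters, so no single cluster mass caps them.

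So drop the first reading and commit to your fallback, which is the correct statement. Let $\tilde p_A(t)=W_A^{-1}\sum_k\sum_c W_{k,A,c}\mathbf{1}\{\bar S_{k,A,c}\le t\}$ be the CDF of the pooled centroids.
\begin{itemize}
\item The first term, $p_A-\tilde p_A=\sum_k (W_{k,A}/W_A)(p_{k,A}-\widehat p_{k,A})$, is a convex combination of local digest errors.
\item The second term, $\tilde p_A-\widehat p_A$, is a genuine T-Digest error, because merged clusters are contiguous in centroid order.
\end{itemize}
Each term is at most $\sin(\pi/\delta)$. This gives $2\sin(\pi/\delta)$, and quantile accuracy $2\epsilon$ via Lemma~\ref{lem:cdf_to_quantile}.

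Add one detail for rigor. A single heavy centroid, or a lone raw score, can form its own cluster while violating \eqref{eq:td_size_criterion}. In that case Lemma~\ref{lem:tdigest_mass_arcsine} does not literally bound its mass. However, such a singleton has an empty range $[L_c,U_c)$ and contributes zero error in Lemma~\ref{lem:tdigest_cdf_mass}. The maximum should therefore be taken over non-singleton clusters only.

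The stated constant $\sin(\pi/\delta)$ survives only in special cases. One is when a single client populates atom $A$: merged clusters are then unions of consecutive local clusters, hence contiguous in raw order. Another is when clients send uncompressed scores. In general, Theorem~\ref{thm:lower_bound} inherits $2\sin(\pi/\delta)$.
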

\begin{proof}
This is a direct application of Lemma~\ref{lemma:tdigest_accuracy} to the weighted sample set $\mathcal D_A$.
\end{proof}

\paragraph{Step 1: First-order condition.} 
The subgradient of \eqref{eq:coreset_beta_augqr} on $\hat{\beta}$ with respect to $\beta$ is given by
\begin{multline}
    \left\{ \sum_{A \in \mathcal{A}} \sum_{c=1}^{m_A} W_{A,c} v_{A,c} \Phi_{A} + w_{\text{test}} v_{\text{test}} \Phi(X_{n+1}) \Bigg| \right. \\ 
    \left. v_{A,c} = \begin{cases} \alpha & \text{if } \bar{S}_{A,c} < \hat{\beta}^T \Phi_{A}, \\ -(1-\alpha) & \text{if } \bar{S}_{A,c} > \hat{\beta}^T \Phi_{A}, \\ t_{A,c} & \text{if } \bar{S}_{A,c} = \hat{\beta}^T \Phi_{A} \end{cases},  v_{\text{test}} = \begin{cases} \alpha & \text{if } S < \hat{\beta}^T \Phi(X_{n+1}), \\ -(1-\alpha) & \text{if } S > \hat{\beta}^T \Phi(X_{n+1}), \\ t_{\text{test}} & \text{if } S = \hat{\beta}^T \Phi(X_{n+1}) \end{cases} \right\},
\end{multline}

where $w_{\text{test}}= \sum_{k=1}^K \lambda_k$ with $\lambda_k=\pi_k/(n_k+1)$ and $t_{A,c}, t_{\text{test}} \in [\alpha-1, \alpha]$.

The first-order condition for optimality implies that for each group $G \in \mathcal{G}$, there exist $t_{A,c} \in [\alpha-1, \alpha]$ (for clusters with ties) and $t_{\text{test}} \in [\alpha-1, \alpha]$ (if the test point has a tie) such that
$$
\sum_{A: A \subseteq G} \sum_{c=1}^{m_A} W_{A,c} v_{A,c} + w_{\text{test}} v_{\text{test}} \mathbf{1}\{X_{n+1} \in G\} = 0,
$$
where the $v_{A,c}$ and $v_{\text{test}}$ are as defined above.

\paragraph{Step 2: Empirical CDF and T-Digest approximate.} 
The atoms $\{A\}$ partition $\mathcal{X}$ disjointly, with membership vectors $\Phi_{A}$. The true empirical CDF per atom is
$$
p_A(t) = W_A^{-1} \sum_{i: X_i \in A} \lambda_{k(i)} \mathbf{1}\{S_i \leq t\},
$$
where $W_A= \sum_{i:X_i \in A} \lambda_{k(i)}$, $\lambda_{k(i)} = \pi_{k(i)} / (n_{k(i)} + 1)$, and $k(i)$ is the index of the client that the sample $X_i$ belongs to, for $i\in[n]$. The coreset per atom yields
$$
\hat{p}_A(t) = W_A^{-1} \sum_{c = 1}^{m_A} W_{A,c} \mathbf{1}\{\bar{S}_{A,c} \leq t\},
$$
with $\sup_t |p_A(t) - \hat{p}_A(t)| \leq \epsilon$ by Corollary~\ref{cor:atomwise_accuracy}.

\paragraph{Step 3: Coverage.} 
With $(X_{n+1},Y_{n+1})\sim \sum_{k=1}^{K}\pi_k \mathrm{P}_k$, as in the proof of FCP \citep{lu2023federated}, let $E_k$ be the event that $(X_{n+1},Y_{n+1})$ is drawn from $\mathrm{P}_k$. 
For each $k\in[K]$, let $\Pi_{n_k+1}$ denote the set of all permutations of $\{1,\ldots,n_k+1\}$, and define the event
\begin{equation}
\mathcal{E}
=\left\{\forall k\in[K],\, \exists \sigma_k\in \Pi_{n_k+1}\ \text{s.t.}\ 
\big(S_{\sigma_k(1),k},\ldots,S_{\sigma_k(n_k+1),k}\big)
=\big(s_{1,k},\ldots,s_{n_k+1,k}\big)\right\},
\end{equation}
where $(s_{1,k},\ldots,s_{n_k+1,k})$ denotes the realized values of $(S_{1,k},\ldots,S_{n_k+1,k})$.

Then, we have, for all $G\in \mathcal{G}$,
$$
\mathbb{P}(Y_{n+1} \in \mathcal{C}(X_{n+1} \mid \widetilde{\mathcal{D}}) \mid X_{n+1} \in G, \mathcal{E}) = \mathbb{E}_{E_k}\left[\mathbb{P}(S_{n+1}\leq \hat{\beta}_{S_{n+1}}^T \Phi(X_{n+1}) \mid X_{n+1} \in G, \mathcal{E}, E_k) \right] = \frac{N}{D},
$$
where
$$
\begin{aligned}
N & = \sum_{A: A \subseteq G} W_Ap_A(\hat{\theta}_A) + w_{\text{test}} \mathbf{1}\{S_{n+1} \le \hat{\theta}_{n+1}\} \mathbf{1}\{X_{n+1} \in G\}, \\
D & = \sum_{A: A \subseteq G} W_A+ w_{\text{test}} \mathbf{1}\{X_{n+1} \in G\}
\end{aligned}
$$
with $\hat{\theta}_A = \hat{\beta}_{S_{n+1}}^T\Phi_{A}$ and $\hat{\theta}_{n+1}=\hat{\beta}_{S_{n+1}}^T \Phi(X_{n+1})$

Analogously, define the GC-FCP approximate numerator and denominator as
$$
\hat{N} = \sum_{A: A \subseteq G} W_A\hat{p}_A(\hat{\theta}_A) + w_{\text{test}} \mathbf{1}\{S_{n+1} \le \hat{\theta}_{n+1}\} \mathbf{1}\{X_{n+1} \in G\}, \quad \hat{D} = D.
$$
By Corollary~\ref{cor:atomwise_accuracy}, we have
$$
N \ge \sum_{A: A \subseteq G} W_A(\hat{p}_A(\hat{\theta}_A) - \epsilon) + w_{\text{test}} \mathbf{1}\{S_{n+1} \le \hat{\theta}_{n+1}\} \mathbf{1}\{X_{n+1} \in G\} = \hat{N} - \epsilon\sum_{A: A \subseteq G} W_A.
$$
By the first-order condition, for the component corresponding to $G$, there exist $t_{A,c}, t_{\text{test}} \in [\alpha - 1, \alpha]$ such that
\begin{multline}
\alpha \sum_{A: A \subseteq G} \sum_{c: \bar{S}_{A,c} < \hat{\theta}_A} W_{A,c} - (1 - \alpha) \sum_{A: A \subseteq G} \sum_{c: \bar{S}_{A,c} > \hat{\theta}_A} W_{A,c} 
+ \sum_{A: A \subseteq G} \sum_{c: \bar{S}_{A,c} = \hat{\theta}_A} W_{A,c} t_{A,c} \\
+ w_{\text{test}} v_{\text{test}} \mathbf{1}\{X_{n+1} \in G\} = 0 \nonumber
\end{multline}
which implies
$$
\sum_{A:A\subseteq G}\sum_{c=1}^{m_A} W_{A,c} \mathbf{1}\left\{ \bar{S}_{A,c} < \hat{\theta}_{A} \right\} + w_{\text{test}} \mathbf{1}\left\{ S_{n+1} < \hat{\theta}_{n+1} \right\} \mathbf{1}\{X_{n+1} \in G\} + (1-\alpha) \mathcal{T}_{0} = (1-\alpha)D - \mathcal{T},
$$
where
$$
\mathcal{T}_{0} = \sum_{A:A\subseteq G}\sum_{c=1}^{m_A} W_{A,c} \mathbf{1}\left\{ \bar{S}_{A,c} = \hat{\theta}_{A} \right\} + w_{\text{test}} \mathbf{1}\left\{ S_{n+1} = \hat{\theta}_{n+1} \right\} \mathbf{1}\{X_{n+1} \in G\}
$$

$$
\mathcal{T} = \sum_{A: A \subseteq G} \sum_{c=1}^{m_A} W_{A,c} t_{A,c} \mathbf{1}\left\{ \bar{S}_{A,c} = \hat{\theta}_{A} \right\} + w_{\text{test}}t_{\text{test}} \mathbf{1}\{S_{n+1}=\hat{\theta}_{n+1}\} \mathbf{1}\{X_{n+1} \in G\}.
$$

By the definition of $\hat{p}_{A}(t)$ and $\hat{N}$, we have
$$
\hat{N}-\alpha \mathcal{T}_{0} = (1-\alpha)D - \mathcal{T}
$$
Because $t_{A,c}, t_{\text{test}} \in [\alpha-1, \alpha]$, 
$$
\alpha \mathcal{T}_0 - \mathcal{T} \ge 0,
$$
yielding 
$$
\hat{N} \ge (1-\alpha) D.
$$
This yields
$$
\begin{aligned}
\mathrm{P}(Y_{n+1} \in \mathcal{C}(X_{n+1} \mid \widetilde{\mathcal{D}}) \mid X_{n+1} \in G, \mathcal{E}) 
& = \frac{N}{D} \\ 
& \ge \frac{ \hat{N} - \epsilon\sum_{A: A \subseteq G} W_A}{D} \\ 
& \ge \frac{(1-\alpha) D- \epsilon\sum_{A: A \subseteq G} W_A}{D} \\ 
& \ge 1-\alpha - \epsilon,
\end{aligned}
$$
where the last inequality is due to $\sum_{A: A \subseteq G} W_A< D$. Taking expectation on both sides w.r.t. $\mathcal{E}$ yields the desired result.

\section{Group-Conditional Coverage Upper bound of GC-FCP} \label{appendix:upper_bound}
\begin{theorem}[Upper bound coverage in the case of perfect quantile regression] \label{thm:upper_bound}
Assume that the conditional score distribution $S\mid X$ is continuous and fix any confidence level $\eta\in(0,1)$.
Then, with probability at least $1-\eta/2$ over the calibration data, for every group $G\in\mathcal{G}$, GC-FCP satisfies
\begin{equation} \label{eq:upper_bound_gcfcp}
\mathbb{P}(Y_{n+1} \in \mathcal{C}(X_{n+1} \mid \widetilde{\mathcal{D}}) \mid X_{n+1} \in G) \\ 
\leq 1 - \alpha + \frac{\pi}{\delta} + \Delta_G,
\end{equation}
where the additive term
\begin{equation} 
\Delta_G = |\mathcal{G}| \frac{ \max\left\{ \frac{\pi}{\delta}, \, \sum_{k=1}^K \frac{\pi_k}{n_k + 1} \right \} }{\sum_k \left[ \frac{\pi_k n_k}{n_k+1} \left( p_{k,G} - \sqrt{\frac{\log(2K / \eta)}{2n_k}} \right) \right]},
\end{equation}
with $p_{k,G} = \mathrm{P}_{X,k}(X\in G)$ being the probability that a covariate $X$ drawn from client $k$ belongs to group $G$.
\end{theorem}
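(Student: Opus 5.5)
The plan mirrors the lower-bound argument of Theorem~\ref{thm:lower_bound}, now tracking the slack in the other direction. First I rewrite the conditional coverage, conditioned on the exchangeability event $\mathcal{E}$, as the ratio $N/D$ from Step~3 of that proof, where
\[
N=\sum_{A\subseteq G} W_A\, p_A(\hat\theta_A)+w_{\text{test}}\mathbf{1}\{S_{n+1}\le\hat\theta_{n+1}\}\mathbf{1}\{X_{n+1}\in G\}.
\]
Applying Corollary~\ref{cor:atomwise_accuracy} in the opposite direction gives $N\le \hat N+\epsilon\sum_{A\subseteq G}W_A$ with $\epsilon=\sin(\pi/\delta)\le\pi/\delta$. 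Dividing by $D$ bounds this term by $\pi/\delta$, which is the first additive term in \eqref{eq:upper_bound_gcfcp}.

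Next I bound $\hat N$ from above using the first-order condition, obtained by choosing the subgradient coefficients as in the $G$-th component of Step~1. From the identity $\hat N-\alpha\mathcal{T}_0=(1-\alpha)D-\mathcal{T}$ and $t\ge\alpha-1$, one gets $\alpha\mathcal{T}_0-\mathcal{T}\le \mathcal{T}_0$. Hence
\[
\hat N\le (1-\alpha)D+\mathcal{T}_0,
\]
and the excess over $1-\alpha$ is at most $\mathcal{T}_0/D$, the normalized weight of coreset points and test points interpolated exactly by the fit. To control $\mathcal{T}_0$, I view \eqref{eq:coreset_beta_augqr} as a linear program in $\beta\in\mathbb{R}^{|\mathcal{G}|}$. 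Some optimal solution lies at a vertex, so at most $|\mathcal{G}|$ points satisfy $\bar S_{A,c}=\hat\beta^\top\Phi_A$ or $S=\hat\beta^\top\Phi(X_{n+1})$. Each such point carries weight either $W_{A,c}\le \epsilon\, W_A$ or $w_{\text{test}}=\sum_k\pi_k/(n_k+1)$. Because $W_A\le 1$, this yields $\mathcal{T}_0\le|\mathcal{G}|\max\{\pi/\delta,\sum_k\pi_k/(n_k+1)\}$, which is exactly the numerator of $\Delta_G$.

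The continuity assumption on $S\mid X$ serves two purposes. It makes distinct raw scores, and hence distinct centroid means, almost surely untied. It also justifies the vertex/interpolation count: generic data have no $|\mathcal{G}|+1$ points on one hyperplane, so the optimal solution is unique or can be chosen at a vertex.

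Finally I lower-bound $D\ge\sum_{A\subseteq G}W_A=\sum_k\frac{\pi_k}{n_k+1}\,|\{i:X_{i,k}\in G\}|$. For each $k$, Hoeffding's inequality at level $\eta/(2K)$ gives $|\{i:X_{i,k}\in G\}|\ge n_k\bigl(p_{k,G}-\sqrt{\log(2K/\eta)/(2n_k)}\bigr)$. A union bound over $k$ then gives the denominator of $\Delta_G$ with probability at least $1-\eta/2$.

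One subtlety is that the bound must hold uniformly over groups. Either I take a union bound over $G$ as well, or I note that the stated probability is meant per $G$. Since the claim says ``for every $G$'', I would first try to absorb the union over $G$ into the failure probability, and otherwise state the result per $G$.

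The bound holds on $\mathcal{E}$ and on this high-probability event. Taking the expectation over $\mathcal{E}$ by the tower rule, as in Theorem~\ref{thm:gcfcp_exact_group}, finishes the proof.

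The main obstacle is the interpolation count. The vertex argument holds for a fixed test score $S$, but the prediction set ranges over $y$, so $\hat\beta$ depends on $S_{n+1}$ itself. In addition, coreset centroids are averages, and their genericity must be inherited from the continuity of the raw scores. I would handle this as \citet{gibbs2023conformal} do: argue that almost surely at most $|\mathcal{G}|$ residuals vanish for every $S$ simultaneously, by a finite union over the combinatorially many possible active sets.
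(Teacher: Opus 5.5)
Your proposal follows the paper's proof essentially step for step. It uses the same bounds $N\le \hat N+\epsilon\sum_{A\subseteq G}W_A$ and $\hat N\le(1-\alpha)D+\mathcal{T}_0$, the same almost-sure bound of at most $|\mathcal{G}|$ ties, the T-Digest cluster-mass bound $W_{A,c}\le \sin(\pi/\delta)W_A$, and Hoeffding with a union over clients. The tie bound comes, as in the paper, from a finite union over $(|\mathcal{G}|+1)$-subsets of pseudo-points whose centroid scores depend piecewise-affinely on the continuous raw scores. The vertex remark is not needed: genericity bounds the ties for every $\beta$, including whichever minimizer the solver returns. Your worry about uniformity over groups is justified, because the paper's Hoeffding claim is also proved only for a fixed $G$. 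A statement holding simultaneously over all groups would therefore need $\log(2K|\mathcal{G}|/\eta)$ in place of $\log(2K/\eta)$; the extra union bound cannot be absorbed for free.
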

\textit{Proof:} See Appendix \ref{sec:proof-thm-upper-bound}.

Theorem~\ref{thm:upper_bound} shows that GC-FCP cannot be arbitrarily conservative. The group-wise coverage lies in a narrow band around $1-\alpha$, with band width controlled explicitly by the additive term $\Delta_G$, which consists of the mis-coverage gap $\epsilon$, calibration sizes $n_k$, and the mixture-weighted group mass $p_{k,G}$.
For instance, when the mixture-weighted group mass $\sum_k \pi_k p_{k,G}$ is small, the denominator shrinks, and $\Delta_G$ grows, which suggests that an efficient prediction set for low-support groups requires more calibration samples.

\subsection{Proof of Theorem~\ref{thm:upper_bound}}
\label{sec:proof-thm-upper-bound}
For any $G\in \mathcal{G}$, where $|\mathcal{G}|=d$, we begin with
$$
\begin{aligned}
\mathrm{P}(Y_{n+1} \in \mathcal{C}(X_{n+1} \mid \widetilde{\mathcal{D}}) \mid X_{n+1} \in G, \mathcal{E}) 
 = \frac{N}{D} 
 \le \frac{ \hat{N} + \epsilon\sum_{A: A \subseteq G} W_A}{D} 
 \le \frac{ \hat{N}}{D} + \epsilon 
\end{aligned}
$$
Record that $t_{A,c}, t_{\text{test}} \in [\alpha-1, \alpha]$, which yields
$$
\mathcal{T} \ge (\alpha-1)\mathcal{T}_0, \quad \hat{N} = (1-\alpha)D - \mathcal{T} + \alpha \mathcal{T}_{0} \\ 
\implies \hat{N}\le (1-\alpha)D + \mathcal{T}_{0}
$$
Hence, 
$$
\mathrm{P}(Y_{n+1} \in \mathcal{C}(X_{n+1} \mid \widetilde{\mathcal{D}}) \mid X_{n+1} \in G, \mathcal{E})  \le 1-\alpha+\epsilon + \frac{\mathcal{T}_0}{D}
$$
Now our target is to upper bound
$$
\mathcal{T}_{0} = \sum_{A:A\subseteq G}\sum_{c=1}^{m_A} W_{A,c} \mathbf{1}\left\{ \bar{S}_{A,c} = \hat{\theta}_{A} \right\} + w_{\text{test}} \mathbf{1}\left\{ S_{n+1} = \hat{\theta}_{n+1} \right\} \mathbf{1}\{X_{n+1} \in G\}
$$
under the assumption that the distribution of $S\mid X$ is continuous. 

To bound this term, we first simplify the notation by relabeling the pseudo-points with the test point with $i=1,\ldots,m,m+1$, where $m = \sum_{A \in \mathcal{A}} m_A = \mathcal{O}(\delta |\mathcal{A}|)$ is the total number of clusters and $\bar{S}_{m+1}=S_{n+1}$. 

\begin{claim} \label{claim:no_tie}
    Under the conditions of Theorem \ref{thm:upper_bound}, with probability 1, we have
    $$
    \sum_{A:A\subseteq G}\sum_{c=1}^{m_A}\mathbf{1}\left\{ \bar{S}_{A,c} = \hat{\theta}_{A} \right\} + \mathbf{1}\left\{ S_{n+1} = \hat{\theta}_{n+1} \right\} \leq d.
    $$
\end{claim}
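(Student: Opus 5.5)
The plan is to adapt the ``at most $d$ interpolated points'' argument of \citet{gibbs2023conformal} to the coreset problem \eqref{eq:coreset_beta_augqr}, whose data are the $m+1$ pseudo-points $(\Phi_i,\bar S_i)$, $i=1,\ldots,m+1$. Here $\Phi_i=\Phi_A$ for a cluster of atom $A$, $\Phi_{m+1}=\Phi(X_{n+1})$, and $\bar S_{m+1}=S_{n+1}$. It suffices to show that, with probability one, the set of indices $i$ with $\bar S_i=\hat\beta^\top\Phi_i$ has cardinality at most $d$. The claimed sum only counts a subset of these indices, so the claim then follows.

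\textbf{Step 1: reduce to a vertex solution.} First I will rewrite \eqref{eq:coreset_beta_augqr} as a linear program in $(\beta,u^+,u^-)$ with constraints $\bar S_i-\beta^\top\Phi_i=u_i^+-u_i^-$ and $u^\pm\ge 0$. I take $\hat\beta$ to be a basic (vertex) optimal solution, as returned by simplex or by standard crossover from interior-point methods. At such a solution the number of indices $i$ with $u_i^+=u_i^-=0$ is controlled by the rank of the active system. Concretely, the tied indices $\mathcal I_0=\{i:\bar S_i=\hat\beta^\top\Phi_i\}$ form a linear system $\Phi_{\mathcal I_0}\hat\beta=\bar S_{\mathcal I_0}$. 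If $|\mathcal I_0|>d$, this system has more equations than unknowns, so it is overdetermined.

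\textbf{Step 2: overdetermination has probability zero.} Next I will show that, almost surely, no $d+1$ pseudo-points lie on a common affine ``hyperplane'' $\{\bar S=\beta^\top\Phi\}$. I condition on the finitely many possible cluster-assignment patterns, that is, which sample indices fall into which cluster of which atom; the digest construction fixes these given the order of the scores. For a fixed pattern, each $\bar S_i$ is a fixed positive-weighted average of a nonempty set of raw scores, and distinct clusters use disjoint sets of scores. Since $S\mid X$ is continuous, conditional on all covariates the raw scores are independent with continuous laws. Hence the vector $(\bar S_1,\ldots,\bar S_{m+1})$ has a jointly continuous (non-atomic in every coordinate direction) conditional law.

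For any subset $\mathcal J$ of size $d+1$, consistency of $\Phi_{\mathcal J}\beta=\bar S_{\mathcal J}$ requires $\bar S_{\mathcal J}$ to lie in the column space of $\Phi_{\mathcal J}$. That column space has dimension at most $d$, so it is a proper subspace of $\mathbb R^{d+1}$. Equivalently, there is a nonzero vector $a$ with $a^\top\bar S_{\mathcal J}=0$, and $a$ depends only on the covariates, not on the scores. The combination $a^\top\bar S_{\mathcal J}$ is a nontrivial linear combination of independent continuous scores with disjoint supports, so it has a continuous law, and the event $a^\top\bar S_{\mathcal J}=0$ has probability zero. A union bound over the finitely many patterns and subsets $\mathcal J$ completes the argument. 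As a special case, two tied clusters of the same atom would force two cluster means to be equal, which is also a null event.

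\textbf{Main obstacle.} The main difficulty I expect is Step 2's conditioning. The cluster assignment depends on the ordering of the scores, so conditioning on a pattern restricts the scores to a polyhedral region. I will handle this by noting that restricting a density to a positive-probability set still yields a density, so null sets remain null. A second subtlety is that the minimizer of \eqref{eq:coreset_beta_augqr} may be non-unique; this is why Step 1 fixes the vertex selection. Both the no-tie event and the vertex choice are needed so that $\hat\beta$, and hence the tie count, is well defined.
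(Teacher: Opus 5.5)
Your proposal is correct and follows essentially the same route as the paper: a union bound over $(d+1)$-subsets of pseudo-points, the observation that the tied score vector must lie in a subspace of dimension at most $d$, and the piecewise-affine dependence of the cluster means on the raw scores over finitely many assignment patterns, combined with continuity of $S\mid X$. Two differences are worth noting. Your explicit use of disjoint cluster supports to obtain a nontrivial linear functional $a^\top\bar S_{\mathcal J}$ of independent atomless scores actually justifies the paper's step that ``the preimage of a measure-zero subspace under the affine map has measure zero'' (false without full rank). Your Step 1 vertex selection, by contrast, is unnecessary: Step 2 rules out $d+1$ coincidences for every $\beta$ simultaneously, so the bound holds for any minimizer.
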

\begin{proof}
    See Appendix~\ref{appendix:proof_of_no_tie_claim}
\end{proof}

By Claim~\ref{claim:no_tie}, with probability 1, we have
$$
\frac{\mathcal{T}_0}{D} \le \frac{d \cdot \max\left\{\max_{A,c:A\subseteq G,c\in [m_A]} W_{A,c}\ ,\ w_{\text{test}}\mathbf{1}\{X_{n+1} \in G\} \right\}}{\sum_{A: A \subseteq G} W_A+ w_{\text{test}} \mathbf{1}\{X_{n+1} \in G\}}
$$
Next, we bound RHS using the properties of T-Digest. Record that $w_{\text{test}} = \sum_{k=1}^K \lambda_k = \sum_{k=1}^K \frac{\pi_k}{n_k + 1}$.

\begin{claim} \label{claim:bounded_weight}
    Assume that GC-FCP uses the scale function $r(q) = \frac{\delta}{2\pi} \arcsin(2q - 1)$ for $q \in [0,1]$, where $q$ is the normalized cumulative weight, i.e., quantile. Then, we have
    $$
        \max_{A,c} W_{A,c} \le W_A\sin\left(\frac{\pi}{\delta}\right) \le \sin\left(\frac{\pi}{\delta}\right) \le \frac{\pi}{\delta}.
    $$
\end{claim}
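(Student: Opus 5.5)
The plan is to reduce Claim~\ref{claim:bounded_weight} to the arcsine mass bound of Lemma~\ref{lem:tdigest_mass_arcsine}, applied atom by atom to the merged digests $\mathsf{TD}_A$.

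\textbf{Step 1 (relative bound).} Fix an active atom $A\in\mathcal A^+$. The merged digest $\mathsf{TD}_A$ is obtained by running the greedy T-Digest procedure, with the size criterion \eqref{eq:td_size_criterion} and scale \eqref{eq:td_scale}, on the pooled weighted points of $\{\mathsf{TD}_{k,A}\}_{k}$. Its total weight is therefore $W_A=\sum_{i:X_i\in A}\lambda_{k(i)}$, because merging preserves weight. Every cluster $c$ of $\mathsf{TD}_A$ satisfies $r(q_c^R)-r(q_c^L)\le 1$, where $q_c^R-q_c^L=W_{A,c}/W_A$. The symmetric-maximization argument in the proof of Lemma~\ref{lem:tdigest_mass_arcsine} then gives $W_{A,c}/W_A\le\sin(\pi/\delta)$. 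Taking the maximum over $c$ and $A$ yields the first inequality.

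\textbf{Step 2 (absolute bound).} Show $W_A\le 1$. Since the atoms are disjoint,
\[
W_A\le\sum_{A'}W_{A'}=\sum_{k=1}^K\frac{\pi_k n_k}{n_k+1}<\sum_k\pi_k=1.
\]
Hence $W_A\sin(\pi/\delta)\le\sin(\pi/\delta)$. The last inequality, $\sin x\le x$ for $x\ge 0$, is elementary.

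\textbf{Main obstacle.} The delicate point is Step 1. The merged clusters are built from centroids that were themselves already compressed, not from raw scores, so the constraint must be verified for the clusters that actually come out of $\mathrm{Merge}$.

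\emph{First attempt.} I would argue that the merge re-runs the same greedy rule, with quantiles normalized by the merged total $W_A$. Any cluster it outputs therefore satisfies \eqref{eq:td_size_criterion}, whatever the inputs were, and Lemma~\ref{lem:tdigest_mass_arcsine} applies verbatim.

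\emph{Edge case.} A single incoming centroid can be heavy enough to violate the criterion on its own. To handle this, I would note that each local centroid satisfies $W_{k,A,c}\le\sin(\pi/\delta)\,W_{k,A}$ with $W_{k,A}\le W_A$. If that is still insufficient, I would fall back on assuming, as the paper implicitly does, that the merged digest is well formed.
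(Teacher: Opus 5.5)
Your proposal matches the paper's proof, which is Lemma~\ref{lem:tdigest_mass_arcsine} applied to each merged digest $\mathsf{TD}_A$, combined with $W_A\le 1$ and $\sin x\le x$; your explicit computation $\sum_{A'}W_{A'}=\sum_k \pi_k n_k/(n_k+1)<1$ is a useful justification of $W_A\le 1$. Your edge-case remark already settles the merge issue, so no fallback is needed: a merged cluster that violates \eqref{eq:td_size_criterion} must be a singleton holding one local centroid, whose mass is at most $\sin(\pi/\delta)W_{k,A}\le\sin(\pi/\delta)W_A$, so the only remaining assumption is that the local digests are well formed, which is the same hypothesis the paper's lemma relies on.
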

\begin{proof}
    This is a direct result of Lemma~\ref{lem:tdigest_mass_arcsine} and $W_A \le 1$.
\end{proof}

\begin{claim} \label{claim:lower_bound_sum_weights}
    Assume that the mixture weights $\pi_k > 0$, with $p_{k,G} = P_{X,k}(X\in G) > 0$, for any $\eta>0$, with probability larger than $1-\eta/2$, we have
    \begin{equation}
        \sum_{A: A \subseteq G} W_A\ge \sum_k \frac{\pi_k n_k }{n_k+1} \left(p_{k,G} - \sqrt{ \frac{ \log(2K / \eta) }{2n_k}}\right).
    \end{equation}
\end{claim}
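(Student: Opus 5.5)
The plan is to write the left-hand side as a sum of independent per-client contributions and apply a Hoeffding bound on each client, with a union bound over clients. Since the atoms partition $\mathcal{X}$ and $G$ is exactly the union of the atoms $A\subseteq G$, we have
\begin{equation*}
\sum_{A: A \subseteq G} W_A=\sum_{A:A\subseteq G}\sum_{i:X_i\in A}\lambda_{k(i)}=\sum_{k=1}^K \frac{\pi_k}{n_k+1}\sum_{i=1}^{n_k}\mathbf{1}\{X_{i,k}\in G\}=\sum_{k=1}^K \frac{\pi_k n_k}{n_k+1}\,\hat p_{k,G},
\end{equation*}
where $\hat p_{k,G}=n_k^{-1}\sum_{i=1}^{n_k}\mathbf{1}\{X_{i,k}\in G\}$ is the empirical frequency of group $G$ at client $k$. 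This step is deterministic. It uses only the fact that the digest weights $W_{A,c}$ sum to $W_A$, so compression does not change the total weight of an atom.

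For each fixed $k$, the indicators $\mathbf{1}\{X_{i,k}\in G\}$ are i.i.d.\ Bernoulli with mean $p_{k,G}$ and take values in $[0,1]$. One-sided Hoeffding then gives
\begin{equation*}
\mathbb{P}\bigl(\hat p_{k,G}<p_{k,G}-t_k\bigr)\le \exp(-2n_k t_k^2).
\end{equation*}
I will choose $t_k=\sqrt{\log(2K/\eta)/(2n_k)}$, so that each failure probability equals $\eta/(2K)$. A union bound over the $K$ clients shows that, with probability at least $1-\eta/2$, every client satisfies $\hat p_{k,G}\ge p_{k,G}-t_k$.

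On this event, all coefficients $\pi_k n_k/(n_k+1)$ are nonnegative, so multiplying the per-client bounds by these coefficients and summing gives the claimed inequality. This argument is for a single fixed $G$.

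The main subtlety is uniformity over groups. Theorem~\ref{thm:upper_bound} asks for the bound to hold simultaneously for every $G\in\mathcal{G}$. In that case the union bound would have to run over $K|\mathcal{G}|$ events, and $\log(2K/\eta)$ would become $\log(2K|\mathcal{G}|/\eta)$. I would state the claim for a fixed $G$ as written, and note this adjustment where the claim is used. Apart from that, the argument is routine; the only care needed is that the positivity assumptions $\pi_k>0$, $p_{k,G}>0$ are not required for the inequality itself.
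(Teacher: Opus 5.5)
Your proof is correct and follows the paper's argument step for step. The paper also writes $\sum_{A\subseteq G}W_A=\sum_k \lambda_k Z_k$ with $Z_k=\#\{i: X_{i,k}\in G\}\sim\mathrm{Bin}(n_k,p_{k,G})$, applies one-sided Hoeffding per client with failure probability $\eta/(2K)$ (your $t_k$ is the paper's divided by $n_k$), and takes a union bound over the $K$ clients. Your side remark is also apt: the claim holds for a fixed $G$, so the ``for every $G\in\mathcal{G}$'' wording of Theorem~\ref{thm:upper_bound} strictly needs an extra union bound over groups, replacing $\log(2K/\eta)$ by $\log(2K|\mathcal{G}|/\eta)$.
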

\begin{proof}
    See Appendix~\ref{appendix:proof_of_lower_bound_sum_weights}.
\end{proof}

Combining Claims~\ref{claim:no_tie}, \ref{claim:bounded_weight}, and \ref{claim:lower_bound_sum_weights}, we have
$$
\frac{\mathcal{T}_0}{D} \le d \frac{ \max\left\{ \frac{\pi}{\delta}, \sum_{k=1}^K \frac{\pi_k}{n_k + 1} \right\} }{\sum_k \frac{\pi_k n_k }{n_k+1} \left(p_{k,G} - \sqrt{ \frac{ \log(2K / \eta) }{2n_k}}\right) }
$$
Record that 
$$
\mathrm{P}(Y_{n+1} \in \mathcal{C}(X_{n+1} \mid \widetilde{\mathcal{D}}) \mid X_{n+1} \in G, \mathcal{E})  \le 1-\alpha+\epsilon + \frac{\mathcal{T}_0}{D},
$$
combing the bound of $\mathcal{T}_0/D$ and taking expectation over $\mathcal{E}$ yield the desired result.

\subsection{Proof of Claim~\ref{claim:no_tie}} \label{appendix:proof_of_no_tie_claim}
\begin{proof}
    Let $\mathcal{L}$ denote the set of all pseudo-points, consisting of the clusters across all atoms and the test point. Specifically, $\mathcal{L} = \{(A,c) : A \in \mathcal{A}, c \in [m_A]\} \cup \{\text{test}\}$, with $|\mathcal{L}| = m + 1$, where $m = \sum_{A \in \mathcal{A}} m_A = \mathcal{O}(\delta |\mathcal{A}|)$ is the total number of clusters at merged T-Digests. For each $l \in \mathcal{L}$, define the feature $\Psi_l = \Phi_{A}$ if $l = (A,c)$ is a cluster in atom $A$, or $\Psi_l = \Phi(X_{n+1})$ if $l = \text{test}$, and the score $T_l = \bar{S}_{A,c}$ if $l = (A,c)$, or $T_l = S_{n+1}$ if $l = \text{test}$. 

The optimizer $\hat{\beta}_{S_{n+1}}$ is the solution to the weighted quantile regression problem $\eqref{eq:coreset_beta_augqr}$ with $S=S_{n+1}$. A tie at pseudo-point $l$ occurs if $T_l = \hat{\beta}_{S_{n+1}}^T \Psi_l$. Let $\mathcal{D}^{aug}=\{X_{i,k}\}_{i\in[n_k],k\in[K]}\cup\{X_{n+1}\}$.

As in \citep{gibbs2023conformal}, we calculate the probability
$$
\mathrm{P}\left(\sum_{l \in \mathcal{L}} \mathbf{1}\left\{T_l = \hat{\beta}_S^T \Psi_l\right\}>d\mid \mathcal{D}^{aug} \right)
$$
The event $\left\{\sum_{l \in \mathcal{L}} \mathbf{1}\left\{T_l = \hat{\beta}_S^T \Psi_l\right\}>d\right\}$ implies that there exists a subset $\mathcal{I} \subseteq \mathcal{L}$ with $|\mathcal{I}| = d+1$ such that $T_l = \hat{\beta}_S^T \Psi_l$ for all $l \in \mathcal{I}$. Therefore,
$$
\mathrm{P}\left(\sum_{l \in \mathcal{L}} \mathbf{1}\left\{T_l = \hat{\beta}_S^T \Psi_l\right\}>d\mid \mathcal{D}^{aug} \right) \leq \sum_{\mathcal{I} \subseteq \mathcal{L}, \, |\mathcal{I}| = d+1} \mathbb{P}\left( \exists \beta \in \mathbb{R}^d \text{ s.t. } T_l = \beta^T \Psi_l \ \forall l \in \mathcal{I} \ \middle|\  \mathcal{D}^{aug} \right)
$$
For a fixed $\mathcal{I}$, the event is that the vector $(T_l)_{l \in \mathcal{I}} \in \operatorname{span}\{ (\Psi_l^T)_{l \in \mathcal{I}} \}$, where $\operatorname{span}\{ (\Psi_l^T)_{l \in \mathcal{I}} \}$ is the image of the map $\beta \mapsto (\Psi_l^T \beta)_{l \in \mathcal{I}}$, which is a linear subspace of $\mathbb{R}^{d+1}$ with dimension at most $d$.

Under the continuity assumption, the conditional distribution of $(S_{i,k})_{i,k} \cup \{S_{n+1}\}$ given the $X$'s is absolutely continuous with respect to Lebesgue measure on $\mathbb{R}^{n+1}$.  

The mapping from the original scores ${S_{i,k}}$ (and $S_{n+1}$) to the coreset scores $\{T_l\}_{l \in \mathcal{L}}$ is piecewise affine. For each atom $A$, the score space $\mathbb{R}^{n_A}$ is partitioned into finitely many open cones $\mathcal{R}_\pi$ indexed by permutations $\pi \in \mathcal{S}_{n_A}$, where $\mathcal{R}_\pi = \{ \mathbf{s} \in \mathbb{R}^{n_A} : s_{\pi(1)} < \dots < s_{\pi(n_A)} \}$, where boundaries have measure zero. Within each $\mathcal{R}_\pi$, the sorted scores $\mathbf{s}^{(\pi)}$ determine fixed quantile positions $q_l$ based on permuted weights, leading to fixed cluster assignments via the deterministic T-Digest merge rules. Thus, each $\bar{S}_{A,c}$ is an affine function of $\mathbf{s}^{(\pi)}$ (weighted average over fixed indices), and hence affine in $\mathbf{s}$. The full map to $(T_l)_{l \in \mathcal{I}}$ is therefore piecewise affine across atoms and the test score, preserving measure-zero sets under the continuous distribution of $S \mid X$.

Since the subspace has Lebesgue measure zero in $\mathbb{R}^{d+1}$, its pre-image under the affine map also has measure zero. For a fixed $\mathcal{I}$, we have
$$
\mathrm{P}\left( \exists \beta \in \mathbb{R}^d \text{ s.t. } T_l = \beta^T \Psi_l \ \forall l \in \mathcal{I} \ \middle|\  \mathcal{D}^{aug} \right) = 0.
$$
Since there are at most $\binom{m+1}{d+1}$ many such $\mathcal{I}$, the union bound yields
$$
\mathrm{P}\left(\sum_{l \in \mathcal{L}} \mathbf{1}\left\{T_l = \hat{\beta}_S^T \Psi_l\right\}>d\mid \mathcal{D}^{aug} \right)=0
$$
Because
$$
\left\{ \sum_{A:A\subseteq G}\sum_{c=1}^{m_A}\mathbf{1}\left\{ \bar{S}_{A,c} = \hat{\theta}_{A} \right\} + \mathbf{1}\left\{ S_{n+1} = \hat{\theta}_{n+1} \right\} >d \right\} \subseteq \left\{\sum_{l \in \mathcal{L}} \mathbf{1}\left\{T_l = \hat{\beta}_S^T \Psi_l\right\}>d\right\},
$$
we have
$$
\mathrm{P}\left(\sum_{A:A\subseteq G}\sum_{c=1}^{m_A}\mathbf{1}\left\{ \bar{S}_{A,c} = \hat{\theta}_{A} \right\} + \mathbf{1}\left\{ S_{n+1} = \hat{\theta}_{n+1} \right\} > d \mid \mathcal{D}^{aug} \right) \\ \le \mathbb{P}\left(\sum_{l \in \mathcal{L}} \mathbf{1}\left\{T_l = \hat{\beta}_S^T \Psi_l\right\}>d\mid \mathcal{D}^{aug} \right)=0.
$$
Marginalizing $\mathcal{D}^{aug}$ yields the desired result.
\end{proof}


    

\subsection{Proof of Claim~\ref{claim:lower_bound_sum_weights}} \label{appendix:proof_of_lower_bound_sum_weights}
\begin{proof}
    The calibration weight in $G$ is $\sum_{A: A \subseteq G} W_A= \sum_k \lambda_k \cdot \#\{i : X_{i,k} \in G\}$, where $\lambda_k = \pi_k / (n_k + 1)$.

Let $Z_k = \#\{i : X_{i,k} \in G\}$, then $\sum_{A: A \subseteq G} W_A= \sum_k \lambda_k Z_k$, and 
$$
Z_k \sim \mathrm{Bin}(n_k, p_{k,G}),\text{ with }p_{k,G} = P_{X,k}(X\in G)
$$
and the expected weight is
$$
E\left[\sum_{A: A \subseteq G} W_A\right] = \sum_k \lambda_k n_k p_{k,G} = \sum_k \frac{\pi_k n_k}{n_k + 1} p_{k,G}.
$$
By Hoeffding's inequality, for each $k$,
$$
\mathrm{P}\left( Z_k \le n_k p_{k,G} - t_k \right) \le \exp\left(\frac{-2 t_k^2}{n_k} \right),
$$
for $t_k >0$. Set $t_k = \sqrt{ (n_k /2) \log(2K / \eta) }$ with any $\eta>0$, we have
$$
\mathrm{P}( Z_k \le n_k p_{k,G} - t_k ) \le \frac{\eta}{2K}.
$$
By union bound over $k$, with probability larger than $1 - \eta /2$,
$$
\sum_k \lambda_k Z_k \ge \sum_k \lambda_k (n_k p_{k,G} - t_k) =\sum_k \frac{\pi_k n_k }{n_k+1} \left(p_{k,G} - \sqrt{ \frac{ \log(2K / \eta) }{2n_k}}\right).
$$
We have, with probability larger than $1 - \eta /2$,
$$
\sum_{A: A \subseteq G} W_A=\sum_k \lambda_k Z_k \ge \sum_k \frac{\pi_k n_k }{n_k+1} \left(p_{k,G} - \sqrt{ \frac{ \log(2K / \eta) }{2n_k}}\right).
$$

\end{proof}

\section{Dual construction for (\ref{eq:condcp_set}) under the objective (\ref{eq:gcfcp_augqr})}
\label{sec:dual-construction}
\paragraph{Reformulate the primal:}
Let $\lambda_k = \pi_k / (n_k + 1)$. The primal objective of \eqref{eq:gcfcp_augqr} is rewritten as:
$$
\sum_{k=1}^K \lambda_k \sum_{i=1}^{n_k} \ell_\alpha(g(X_{i,k}), S_{i,k}) + \left( \sum_{k=1}^K \lambda_k \right) \ell_\alpha(g(X_{n+1}), S).
$$
The pinball loss is given by
$
\ell_\alpha(u,s) = (1-\alpha)(s-u)_+ + \alpha(u-s)_+,
$
where $(a)_+ = \max\{a,0\}$.
To reformulate the primal problem~\eqref{eq:gcfcp_augqr}, we first introduce the following claim to rewrite the pinball loss as an optimization problem.
\begin{claim}[Pinball Loss Reformulation] \label{claim:pinball_reformulation}
    Fix any scalar residual $r=s-u$. Consider the following problem
\begin{equation}
\label{eq:problem_pinball_loss}
    \min_{p,q\ge 0}\ (1-\alpha)p+\alpha q
\quad\text{s.t.}\quad r=p-q.
\end{equation}
Then, the optimal value of \eqref{eq:problem_pinball_loss} equals $\ell_\alpha(u,s)$.
\end{claim}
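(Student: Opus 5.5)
The claim follows by writing the pinball loss as the optimal value of an elementary two-variable linear program. Fix $r=s-u$. Every feasible pair $(p,q)$ with $p,q\ge 0$ and $p-q=r$ can be parametrized as $p=(r)_++t$, $q=(-r)_++t$ for a single $t\ge 0$. Indeed, given feasible $(p,q)$, set $t=\min\{p,q\}$. The difference $p-q=r$ then forces $p-t=(r)_+$ and $q-t=(-r)_+$. We check this by cases on the sign of $r$. If $r\ge 0$, then $q\le p$, so $t=q$ and $p-t=r$. If $r<0$, then $t=p$ and $q-t=-r$. Conversely, every $t\ge 0$ gives a feasible pair.

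Substituting this parametrization, the objective becomes
\[
(1-\alpha)p+\alpha q=(1-\alpha)(r)_++\alpha(-r)_++\bigl((1-\alpha)+\alpha\bigr)t=(1-\alpha)(r)_++\alpha(-r)_++t .
\]
This is strictly increasing in $t\ge 0$, since the coefficient of $t$ equals $1>0$. The minimum is therefore attained at $t=0$, that is, at $p=(s-u)_+$ and $q=(u-s)_+$. Its value is $(1-\alpha)(s-u)_++\alpha(u-s)_+$. This is exactly the form $\ell_\alpha(u,s)=(1-\alpha)(s-u)_++\alpha(u-s)_+$ recalled just before the claim, which agrees with the case definition \eqref{eq:pinball_loss} by checking $s\ge u$ and $s<u$ separately.

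An equivalent route uses LP duality: the dual of \eqref{eq:problem_pinball_loss} is $\max_{\eta} \eta r$ subject to $\eta\le 1-\alpha$ and $-\eta\le \alpha$, i.e. $\eta\in[-\alpha,1-\alpha]$. Its value is $\max\{(1-\alpha)r,-\alpha r\}=\ell_\alpha(u,s)$, and strong duality holds because the primal is feasible and bounded below by $0$. This dual form is what the subsequent dual construction will exploit, because the box constraint $\eta\in[-\alpha,1-\alpha]$ is the constraint that appears in the dual of \eqref{eq:gcfcp_augqr}.

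The only step requiring care is the reduction of an arbitrary feasible pair to the parametrized form, which rests on $\alpha\in(0,1)$ making both cost coefficients positive. With that in place, there is no genuine obstacle, and the argument is a routine verification.
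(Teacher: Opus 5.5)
Your argument is correct and is essentially the paper's: the paper substitutes $p=r+q$, notes that feasibility forces $q\ge(-r)_+$, and minimizes $(1-\alpha)r+q$ at $q^*=(-r)_+$, which is your parametrization with $t=q-(-r)_+$ and unit slope in $t$ (your LP-duality aside is extra and not used by the paper). One small correction to your closing remark: the parametrization of the feasible set does not involve $\alpha$ at all, and the minimization uses only that the slope $(1-\alpha)+\alpha$ equals $1$, so the primal argument does not rely on $\alpha\in(0,1)$.
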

\begin{proof}
Since $r=p-q$, we have $p=r+q$. Together with $p\ge 0$ and $q\ge 0$, feasible pairs satisfy $q\ge \max\{-r,0\}=(-r)_+$ and then $p=r+q\ge 0$.
Substitute $p=r+q$ into the objective:
$$
(1-\alpha)p+\alpha q
=(1-\alpha)(r+q)+\alpha q
=(1-\alpha)r+q.
$$
Thus, minimizing over feasible $q$ is equivalent to
$$
\min_{q\ge (-r)_+}\ (1-\alpha)r + q,
$$
whose minimum is attained at $q^*=(-r)_+$, giving value
$$
(1-\alpha)r + (-r)_+.
$$
If $r\ge 0$, then $(-r)_+=0$, so the value is $(1-\alpha)r=(1-\alpha)(s-u)$. If $r<0$, then $(-r)_+=-r$, so the value is $(1-\alpha)r-r = -\alpha r=\alpha(u-s)$.
Therefore, the optimal value is exactly
$$
(1-\alpha)(s-u)_+ + \alpha(u-s)_+ = \ell_\alpha(u,s).
$$
\end{proof}

By Claim~\ref{claim:pinball_reformulation}, we introduce auxiliary $p_{i,k}, q_{i,k} \geq 0$ for each calibration point $(i,k)$ and $p_k^{\text{test}}, q_k^{\text{test}} \geq 0$ for each virtual test copy per client $k$ and rewrite the problem \eqref{eq:gcfcp_augqr} as
\begin{equation}
\begin{aligned}
\mathrm{(P0)}: \quad \mathop{\min}_{g\in \mathcal{F}_{\mathcal{G}}, p,q}\quad & \sum_{k=1}^K \sum_{i=1}^{n_k} \lambda_k \left( (1-\alpha) p_{i,k} + \alpha q_{i,k} \right) + \sum_{k=1}^K \lambda_k \left( (1-\alpha) p_k^{\text{test}} + \alpha q_k^{\text{test}} \right),\\
\text{s.t.} \quad & S_{i,k} - g(X_{i,k}) = p_{i,k} - q_{i,k}, S - g(X_{n+1}) = p_k^{\text{test}} - q_k^{\text{test}}, \quad  \forall i\in[n_k],~\forall k\in[K] \\ & p_{i,k}, q_{i,k},p_k^{\text{test}},q_k^{\text{test}} \geq 0, \quad \forall i\in[n_k],~\forall k\in[K]
\end{aligned}
\end{equation}


\paragraph{Dual problem:}
Introduce dual variables $\eta_{i,k}$ for calibration constraints and $\eta_k^{\text{test}}$ for each test copy.
Using standard calculations, the dual problem is given by the following linear programming (LP) problem:
\begin{align}
\mathrm{(P1)}: \quad \max_{\{\eta_{i,k}\},\{\eta^{\mathrm{test}}_k\}}~~
& \sum_{k=1}^K\sum_{i=1}^{n_k}\eta_{i,k}S_{i,k} ~+~ S\sum_{k=1}^K \eta^{\mathrm{test}}_k
\label{eq:dual_lp_obj_struct}\\
\text{s.t. }~~
& -\lambda_k\alpha \le \eta_{i,k}\le \lambda_k(1-\alpha), \quad \forall i\in[n_k],~\forall k\in[K], \label{eq:dual_box_cal_struct}\\
& -\lambda_k\alpha \le \eta_k^{\mathrm{test}}\le \lambda_k(1-\alpha), \quad \forall k\in[K], \label{eq:dual_box_test_struct}\\
& \sum_{k=1}^K\sum_{i=1}^{n_k}\eta_{i,k}\Phi(X_{i,k}) ~+~ \sum_{k=1}^K \eta_k^{\mathrm{test}}\Phi(X_{n+1}) = 0.
\label{eq:dual_coupling_struct}
\end{align}
where $\Phi(x) = \bigl(\mathbf{1}\{x \in G\}\bigr)_{G \in \mathcal{G}} \in \{0,1\}^{|\mathcal{G}|}$. The last constraint is to enforce the linearity of $g(x)=\beta^T \Phi(x)$, which arises mathematically from the stationary condition with respect to the primal weights $\beta$.

\paragraph{Set construction via KKT conditions:}
In \citep[Section 4]{gibbs2023conformal}, the conformal prediction set is constructed using the dual solution $\eta_S$ (for input score $S$) and KKT conditions. The optimal $\hat{g}_S(X_{n+1})$ satisfies: $\eta_S^{n+1} = 1-\alpha$ if $S > \hat{g}_S(X_{n+1})$, $\eta_S^{n+1} = -\alpha$ if $S < \hat{g}_S(X_{n+1})$, and $\eta_S^{n+1} \in [-\alpha, 1-\alpha]$ if $S = \hat{g}_S(X_{n+1})$.

Similarly, let $\eta_S^{\text{test}} = \sum_{k=1}^K \eta_{k,S}^{\text{test}}$ be the sum of dual variables over test copies (since the test is duplicated). The set is:
$$
\begin{aligned} \label{eq:dual condCP}
\hat{C}_{\text{dual}}(X_{n+1}) = \left\{ y : \eta_{s(X_{n+1}, y)}^{\text{test}} \le \left( \sum_{k=1}^K \lambda_k \right) (1 - \alpha) \right\},
\end{aligned}
$$
using the weighted upper bound. 

\paragraph{Binary search:} We compute $\hat{C}_{\text{dual}}(X_{n+1})$ using the following two-step procedure. First, using Algorithm 1 of \citep{gibbs2023conformal}, we binary search for the largest value of $S^*$ such that $\eta_{S^*}^{n+1} < 1 - \alpha$. Second, we output all $y$ such that $s(X_{n+1},y) \le S^*$ \citep[Theorem 4]{gibbs2023conformal}.

\section{Additional Experimental Results} \label{appendix:additional_experiments}

\subsection{Experimental Settings and Results of Section~\ref{sec:experiments}} 
\label{appendix:additional_results_of_main}
Other parameters for the synthetic regression can be found in Table~\ref{tab:synth_setup}.

\begin{table}[ht]
\centering
\caption{Experimental setup for synthetic regression.}
\label{tab:synth_setup}

\begin{tabular}{@{}l l@{}}
\toprule
\textbf{Parameter} & \textbf{Value} \\
\midrule
T-Digest parameter $\delta$ & $\delta=250$ \\
Local calibration sizes $n_k$ & $n_1=1000$, $n_k=333$ for $k>1$ \\
Mixture weights $\pi_k$ & $\pi_k=1/K$ for all $k$ \\
Miscoverage level $\alpha$ & $\alpha=0.1$ \\
Model $f(\cdot)$ & linear regression \\
Score $s(x,y)$ & $|y-f(x)|$ \\
Test points $n_{\text{test}}$ & $200$ \\
Monte Carlo runs & $100$ \\
\bottomrule
\end{tabular}
\end{table}

Figure~\ref{fig:cifar_curves} reports the empirical group-wise coverage and the average set size over confidence level $\alpha$ on CIFAR-10. Vanilla CP achieves competitive marginal coverage but exhibits group-wise miscalibration under non-i.i.d. client partitions. GC-FCP reduces these discrepancies, pushing group-wise coverage closer to the target across $G \in \mathcal{G}$. This improvement comes with a moderate increase in set size, reflecting the standard trade-off when enforcing conditional validity over overlapping groups.

Figure~\ref{fig:path_curves} reports the empirical group-wise coverage and the average set size over confidence level $\alpha$ on PathMNIST. As expected, vanilla CP exhibits nonuniform conditional coverage across groups and different values of $\alpha$. GC-FCP improves stability across groups while keeping prediction sets competitive in federated settings where raw examples remain local.

\begin{figure*}[ht]
\centering
\begin{subfigure}[t]{\textwidth}
    \centering
    \includegraphics[width=\linewidth]{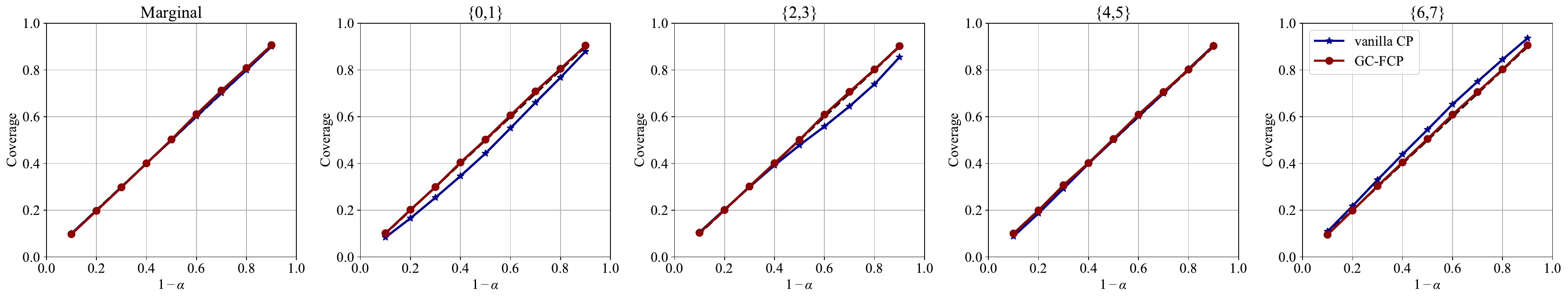}
    \caption{Average coverage over different groups.}
\end{subfigure} \\
\begin{subfigure}[t]{\textwidth}
    \centering
    \includegraphics[width=\linewidth]{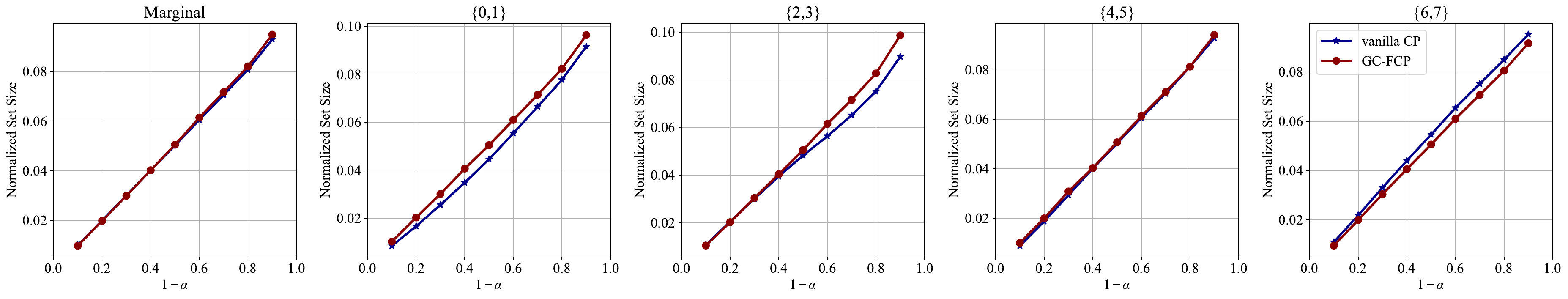}
    \caption{Average set size over different groups.}
\end{subfigure}
\caption{Average coverage and set size versus coverage level $1-\alpha$ with vanilla CP and the proposed GC-FCP on CIFAR-10.}
\label{fig:cifar_curves}
\end{figure*}

\begin{figure*}[ht]
\centering
\begin{subfigure}[t]{\textwidth}
    \centering
    \includegraphics[width=\linewidth]{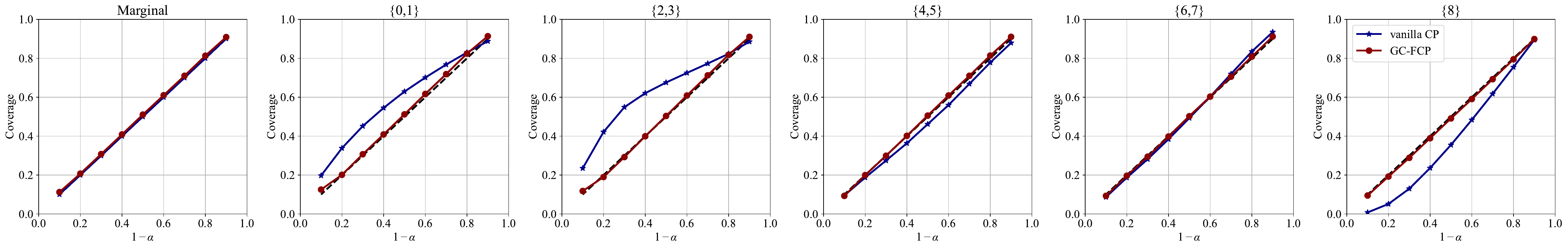}
    \caption{Average coverage over different groups.}
\end{subfigure} \\ 
\begin{subfigure}[t]{\textwidth}
    \centering
    \includegraphics[width=\linewidth]{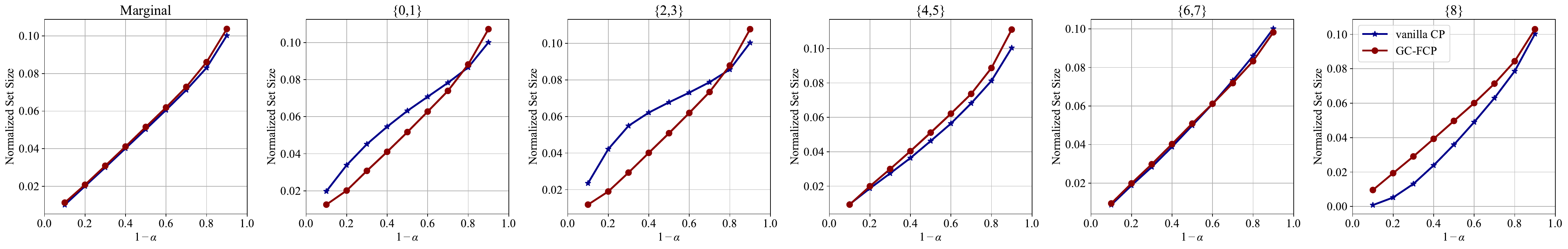}
    \caption{Average set size over different groups.}
\end{subfigure}
\caption{Average coverage and set size versus coverage level $1-\alpha$ with vanilla CP and the proposed GC-FCP on PathMNIST.}
\label{fig:path_curves}
\end{figure*}

\subsection{ImageNet-1K Experiments}
\label{appendix:imagenet_experiment}

{\color{black}
We additionally evaluate GC-FCP on the ImageNet-1K validation set~\citep{deng2009imagenet} using a pretrained ResNet-50 classifier~\citep{He_2016_CVPR}.
In each Monte Carlo repetition, we split the validation data into $n_{\mathrm{cal}}=40{,}000$ calibration samples and $n_{\mathrm{test}}=10{,}000$ evaluation samples.
The calibration set is distributed across $K=50$ clients by a class-wise Dirichlet allocation with concentration parameter $0.3$, enforcing at least $200$ calibration samples per client, and we use uniform mixture weights.
We consider three standard classification scores: THR/HPS~\citep{sadinle2019least}, APS~\citep{romano2020classification}, and RAPS~\citep{angelopoulos2021uncertainty}.
}

{\color{black}
We design 10 semantic groups plus 5 ambiguity groups, which tests coverage across both category and model uncertainty, following ImageNet's semantic hierarchy \citep{deng2009imagenet} and standard confidence-based uncertainty measures in calibrated classification \citep{guo2017calibration}.
Let $\mathcal Y=\{1,\ldots,1000\}$ denote the ImageNet-1K label space, and let $\{\mathcal C_m\}_{m=1}^{10}$ be a fixed partition of labels into ten semantic meta-categories.
For classifier probabilities $p_\theta(\cdot\mid x)$, define
\[
P_m(x)=\sum_{c\in\mathcal C_m}p_\theta(c\mid x),
\qquad
M(x)=\arg\max_{m\in\{1,\ldots,10\}}P_m(x),
\]
which induces semantic groups $G_m^{\mathrm{sem}}=\{x:M(x)=m\}$.
To capture prediction ambiguity, let $p_{(1)}(x)\ge p_{(2)}(x)$ be the two largest class probabilities and define the margin $A(x)=p_{(1)}(x)-p_{(2)}(x)$.
Using calibration-split quantiles of $A(x)$, we form five ambiguity bins $G_b^{\mathrm{amb}}$.
The final family is $\mathcal G=\{G_m^{\mathrm{sem}}\}_{m=1}^{10}\cup\{G_b^{\mathrm{amb}}\}_{b=1}^{5}$.
Each example belongs to one semantic and one ambiguity group, yielding at most $10\times5=50$ semantic--ambiguity atoms after removing empty intersections.

}

{\color{black}
We compare GC-FCP against centralized CP, FedCP~\citep{lu2023federated}, Mondrian-FedCP, a FedCF-style group-wise baseline~\citep{srinivasan2025fedcf}, and importance-weighted FCP~\citep{plassier2023conformal}.
Since the group-wise baselines do not natively produce a simultaneous predictor under overlapping groups, overlapping test points are assigned by a fixed minimum-index rule.
}

\begin{figure*}[t]
\centering
\includegraphics[width=\linewidth]{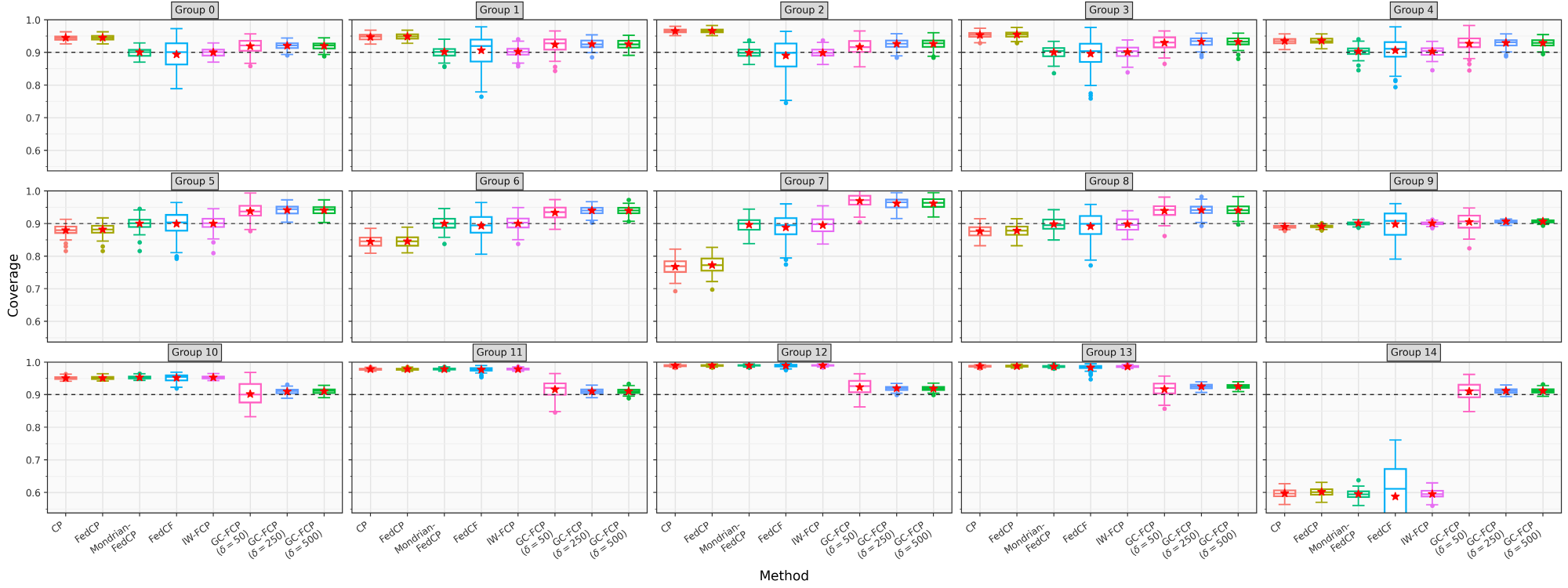}
\caption{\rev{Group-conditional coverage of baselines and GC-FCP on ImageNet-1K with the RAPS score. }}
\label{fig:imagenet_coverage}
\end{figure*}

\begin{figure*}[t]
\centering
\includegraphics[width=\linewidth]{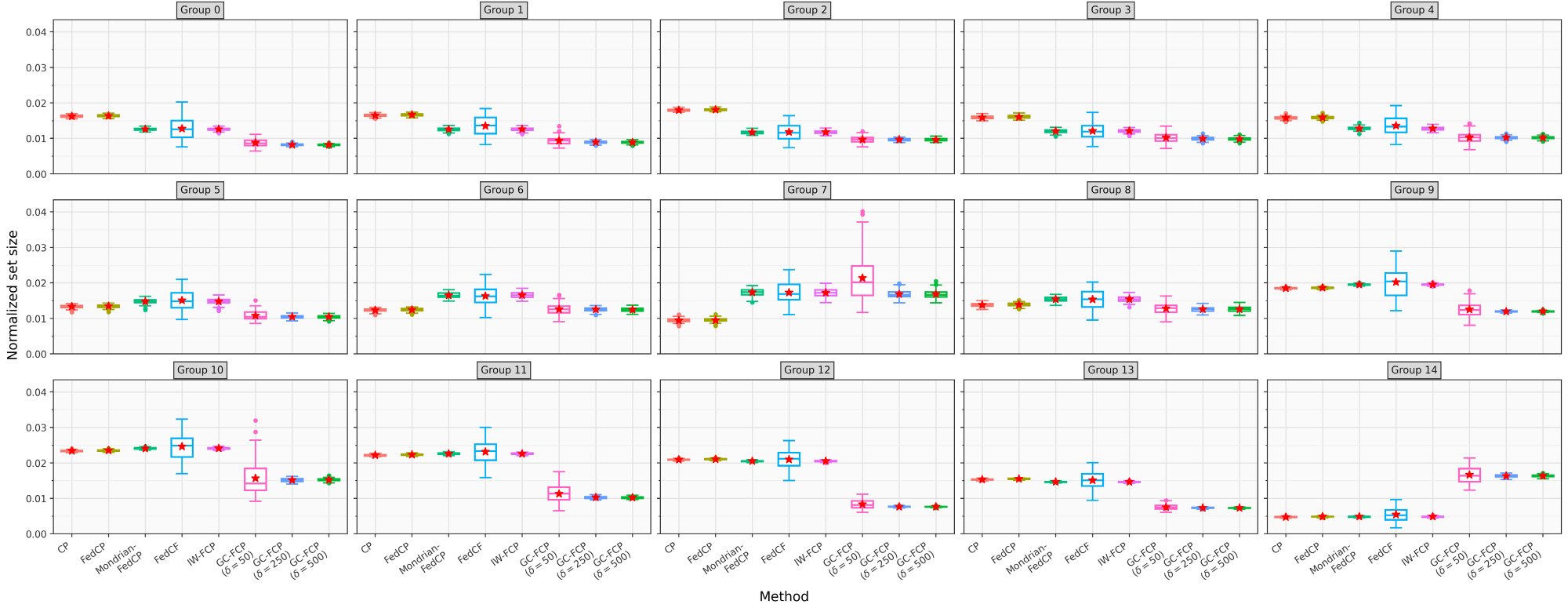}
\caption{\rev{Group-conditional normalized set size of baselines and GC-FCP on ImageNet-1K with the RAPS score.}}
\label{fig:imagenet_set_size}
\end{figure*}

{\color{black}
Figures~\ref{fig:imagenet_coverage} and~\ref{fig:imagenet_set_size} report the main ImageNet comparison using the RAPS score.
GC-FCP with $\delta\in\{50,250,500\}$ achieves near-target coverage across the semantic--ambiguity groups, whereas marginal and group-wise baselines under-cover several difficult groups.
}

\begin{table*}[t]
\centering
\caption{\rev{Ablation of the group design used by GC-FCP on ImageNet-1K under three classification scores. Marginal coverage, average group coverage, and normalized set size are reported as mean $\pm$ standard error over 10 Monte Carlo repetitions; worst-group coverage is computed from pooled group-wise coverages across repetitions.}}
\label{tab:imagenet_group_ablation}
{\color{black}
\begin{tabular}{llcccc}
\hline
\textbf{Score} & \textbf{Group design} & \textbf{Marg. cov.} & \textbf{Worst-group cov.} & \textbf{Avg. group cov.} & \textbf{Norm. set size} \\
\hline
APS & ambiguity & $0.905\pm0.001$ & $0.901$ & $0.905\pm0.001$ & $0.0875\pm0.0008$ \\
APS & semantic & $0.911\pm0.001$ & $0.902$ & $0.927\pm0.001$ & $0.1779\pm0.0015$ \\
APS & semantic--ambiguity & $0.915\pm0.001$ & $0.906$ & $0.926\pm0.001$ & $0.0953\pm0.0005$ \\
RAPS & ambiguity & $0.907\pm0.001$ & $0.902$ & $0.907\pm0.001$ & $0.0110\pm0.0000$ \\
RAPS & semantic & $0.913\pm0.001$ & $0.905$ & $0.927\pm0.002$ & $0.0184\pm0.0001$ \\
RAPS & semantic--ambiguity & $0.916\pm0.001$ & $0.908$ & $0.925\pm0.002$ & $0.0113\pm0.0000$ \\
THR & ambiguity & $0.906\pm0.001$ & $0.900$ & $0.906\pm0.001$ & $0.0023\pm0.0000$ \\
THR & semantic & $0.912\pm0.001$ & $0.903$ & $0.929\pm0.002$ & $0.0017\pm0.0000$ \\
THR & semantic--ambiguity & $0.916\pm0.001$ & $0.907$ & $0.924\pm0.001$ & $0.0096\pm0.0005$ \\
\hline
\end{tabular}
}
\end{table*}

{\color{black}
Table~\ref{tab:imagenet_group_ablation} isolates the effect of group design with $\delta=250$ fixed.
Across THR, APS, and RAPS, semantic, ambiguity, and semantic--ambiguity groups all maintain worst-group coverage near the target, while normalized set size changes with the conditioning family and score.
}

\begin{figure*}[t]
\centering
\includegraphics[width=0.7\linewidth]{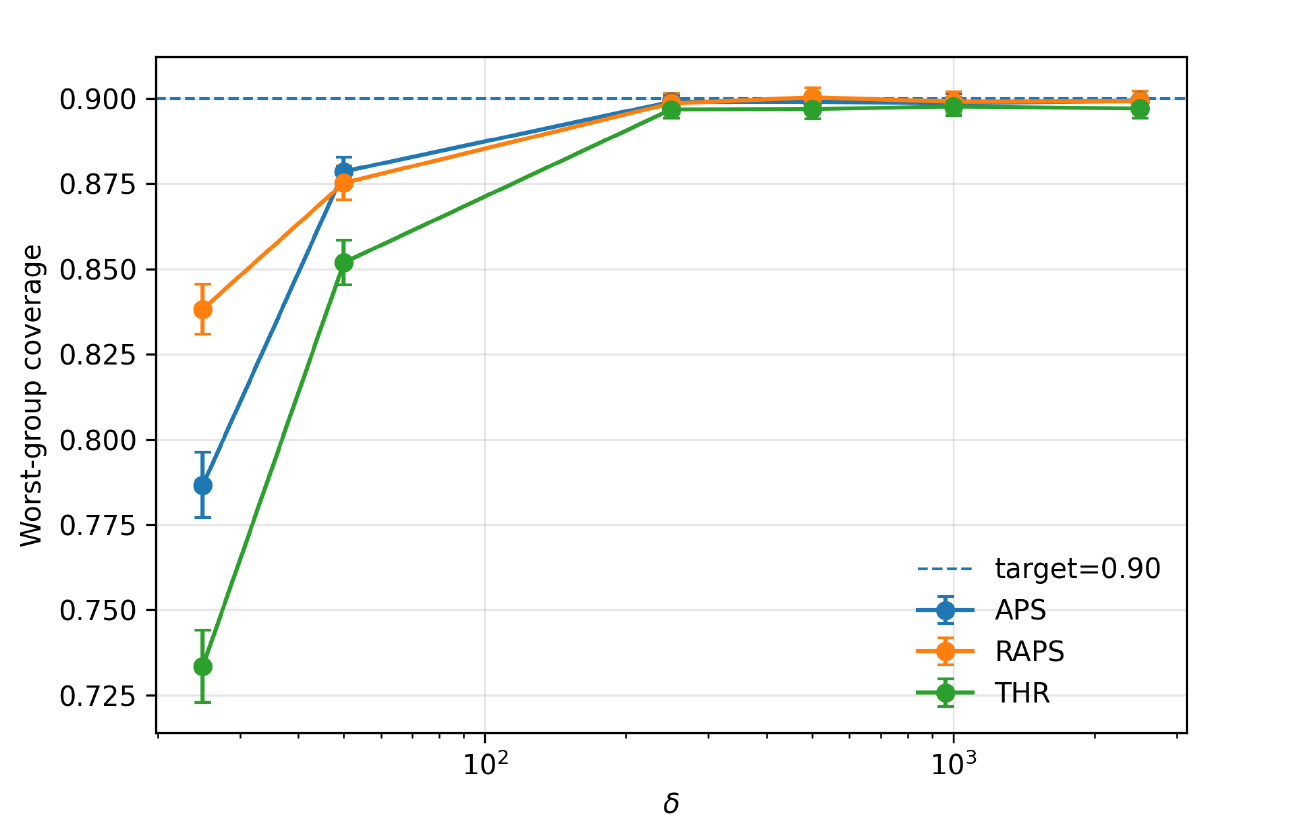}
\caption{\rev{Worst-group coverage of GC-FCP versus T-Digest compression parameter $\delta$ on ImageNet-1K.}}
\label{fig:imagenet_delta_ablation}
\end{figure*}

{\color{black}
Figure~\ref{fig:imagenet_delta_ablation} isolates the compression--accuracy trade-off by sweeping $\delta\in\{25,50,250,500,1000,2500\}$ under the semantic--ambiguity group family.
The results support the theoretical trend in Theorem~\ref{thm:lower_bound}: aggressive compression can reduce worst-group coverage, while moderate $\delta$ values recover the target level.
The effect of federated aggregation is also reflected in the main CIFAR-10 and PathMNIST comparisons between centralized GC-FCP and compressed GC-FCP, where moderate compression closely tracks the exact atom-stratified reference while substantially reducing computation.
}

\end{document}